\newtheorem{assumption}{Assumption}
\newtheorem{proposition}{Proposition}
\newtheorem{theorem}{Theorem}
\newtheorem{lemma}{Lemma}
\newtheorem{definition}{Definition}
\newtheorem{corollary}{Corollary}
\newcommand{\hide}[1]{} 
\newcommand\Rbb{\ensuremath{{\mathbb{R}}}}
\newcommand\Sc{\ensuremath{\mathcal{S}}}
\newcommand\Pc{\ensuremath{{\mathcal{P}}}}
\newcommand\Nc{\ensuremath{{\mathcal{N}}}}
\title{Low-rank Matrix Recovery with Unknown Correspondence}
\author{Zhiwei Tang\thanks{Use footnote for providing further information
about author (webpage, alternative address)---\emph{not} for acknowledging
funding agencies.} \\
School of Science and Engineering \\
Chinese University of Hong Kong (Shenzhen)\\
Shenzhen \\
\texttt{zhiweitang1@link.cuhk.edu.cn} \\
}
\begin{document}

\maketitle

\begin{abstract}
   We study on a matrix recovery problem with unknown correspondence: given the observation matrix $M_o=[A,\tilde P B]$, where $\tilde P$ is an unknown permutation matrix, we aim to recover the underlying matrix $M=[A,B]$. Such problem commonly arises in many applications where heterogeneous data are utilized and the correspondence among them are unknown, e.g., due to privacy concerns. We show that it is possible to recover $M$ via solving a nuclear norm minimization problem under a proper low-rank condition on $M$, with provable non-asymptotic error bound for the recovery of $M$. We propose an algorithm, $\text{M}^3\text{O}$ (Matrix recovery via Min-Max Optimization) which recasts the combinatorial optimization problem as a continuous minimax problem and solves it by proximal gradient with a Max-Oracle. $\text{M}^3\text{O}$ can also be applied to a more general scenario where we have missing entries in $M_o$ and multiple groups of data with distinct unknown correspondence. Experiments on  simulated data, the MovieLens 100K dataset and Yale B database show that $\text{M}^3\text{O}$ achieves state-of-the-art performance over several baselines and can recover the ground-truth correspondence with high accuracy.
\end{abstract}

\section{Introduction}
\label{sec:intro}
In the era of big data, one usually needs to utilize data gathered from multiple disparate platforms when accomplishing a specific task. However, the correspondence among the data samples from these different sources are often unknown due to either missing identity information or privacy reasons \citep{unnikrishnan2018unlabeled,gruteser2003privacy,das_sample--sample_2018}. Examples include the multi-image matching problem studied in \citep{ji_robust_2014,zeng_finding_2012,zhou_multi-image_2015}, the record linkage problem \citep{chan2001file} and the federated recommender system \citep{yang2020federated}.

\vspace{-0.1cm}
In the simplest scenario, we have two data matrices $A=[a_1,...,a_n ]^\top$, $B=[b_1,...,b_n ]^\top$ with $a_i\in \mathbb{R}^{m_A}$ and $b_i\in \mathbb{R}^{m_B}$, which are from two different platforms (data sources). As discussed above, the  correspondence $(a_i,b_i)$ may not be available, and thereby the goal is to recover the underlying correspondence between $a_1,...,a_n$ and $b_{\tilde\pi{(1)}},...,b_{\tilde\pi{(n)}}$, where $\tilde\pi(\cdot)$ denotes an unknown permutation. We can translate such problem described above as a matrix recovery problem, i.e., to recover the matrix $M=[A,B]$ based on the permuted observation $ M_o=[A,\tilde P B]$, where $\tilde P\in\Pc_n$ is an unknown permutation matrix and $\Pc_n $ denotes the set of all $n\times n$ permutation matrices. We term this problem as \textbf{M}atrix \textbf{R}ecovery with \textbf{U}nknown \textbf{C}orrespondence (\textbf{MRUC}). 

\vspace{-0.1cm}
Inspired by the classical low-rank model for matrix recovery \citep{Wright-Ma-2021,mazumder_spectral_2010,hastie_matrix_2015}, we especially focus on the scenario where the matrix $M$ features a certain low-rank structure. Such low-rank model has achieved great success in many applications like the recommender system \citep{schafer2007collaborative,mazumder_spectral_2010} and the image recovery and alignment problem \citep{zeng_finding_2012,zhou_multi-image_2015}. By denoting $B_o=\tilde PB$, we want to solve the following rank minimization problem for MRUC, 
\vspace{-0.2cm}
\begin{align}
  \label{p:minimize_rank}
  \underset{ P\in \Pc_n}{\text{min }} \text{rank}([A, P B_o]).
\end{align} 
\vspace{-0.6cm}

\textbf{Practical applications.}  It is known that the recommender system often suffers from  data sparsity \citep{zhang2012multi} because  users typically only provide ratings  few items. To enlarge the set of observable ratings for each user, we may harness extra data from multiple platforms (Netflix, Amazon, Youtube, etc.). One classical work on this problem is the multi-domain recommender system considered in \citep{zhang2012multi}. Unfortunately, their work neglects a crucial issue that data from these diverse platforms (or domains) are not always well aligned for two primary reasons. The first  is that the same user may use different identities, or even leave nothing about their identities, on these platforms. Another reason is that, those platforms are not allowed to share with each other the identity information about their users for preserving privacy. Another application is the visual permutation learning problem \citep{santa2017deeppermnet}, where one needs to recover the original image from only a subset of the {\it shuffled} pixels. Both of the two applications give rise to a challenging extension of the MRUC problem, where we not only need to recover multiple correspondence across different data sources, but also face the difficulty of dealing with the missing values in data matrix .

\vspace{-0.1cm}
\textbf{Relationship to the multivariate unlabeled sensing problem.} Problem
\eqref{p:minimize_rank} is closely related to the \textbf{M}ultivariate \textbf{U}nlabeled \textbf{S}ensing (\textbf{MUS}) problem, which has been studied in \citep{pananjady2017denoising,zhang2019benefits,zhang2019permutation,zhang2020optimal,slawski2020sparse,slawski2020two}. Specifically, the MUS is the multivariate linear {\it regression} problem with unknown correspondence, i.e., it solves \vspace{-0.2cm} \begin{align}
  \label{p:MUS}
  \min_{P\in\Pc_n,W\in \Rbb^{m_2\times m_1}} \|Y-PXW\|_F^2,
\end{align}\vspace{-0.6cm}

where $W\in\Rbb^{m_2\times m_1}$ is the regression coefficient matrix, $Y\in \Rbb^{n\times m_1}$ and $X\in \Rbb^{n\times m_2}$ denotes the output and the permuted input respectively, and  $\left\|\cdot\right\|_F$ is the matrix Frobenius norm. In fact, a concurrent work \citep{yao2021unlabeled} studies the same rank minimization problem as \eqref{p:minimize_rank}, but their approach is to solve it using the algorithm developed for MUS problem. Despite of the similarity to the MUS problem, we remark that MRUC problem  has it own distinct features and, as shown in Section \ref{sec:experiment},  the algorithm for the MUS algorithm can not be directly and effectively applied, especially when there are multiple unknown correspondence and missing entries to be considered. 

\vspace{-0.1cm}
\textbf{Related works. }
To the best of our knowledge, the concurrent and independent \citep{yao2021unlabeled} is the only work that also considers the MRUC problem. Theoretically, \citep{yao2021unlabeled} showed that there exists an non-empty open subset $U\subseteq \Rbb^{n\times (m_1+m_2)}$, such that $\forall M\in U$, solving \eqref{p:minimize_rank} is bound to recover the original correspondence. However, such result only proves its existence and do not provide a concrete characterization for the subset $U$. Regarding the algorithm design, \citep{yao2021unlabeled} follows the idea of \citep{slawski2020sparse,slawski2020two} and treats  problem \eqref{p:minimize_rank} heuristically as a MUS problem. However, there are three main drawbacks in the their algorithm that largely limit its practical value. First, it can only work when the data is sparsely permuted, i.e., there are only $k$ data vectors being shuffled with $k\ll n$; Second, they only consider the scenario with a single unknown correspondence; Last but not least, their method can not deal with data with missing values.

\vspace{-0.1cm}
\textbf{Contributions of this work. }
 Our contributions in this work lie in both theoretical and practical aspects. Theoretically, we are the first to rigorously study how the rank of the data matrix is perturbed by the permutation, and show that   problem \eqref{p:minimize_rank} can be used to recover a generic low-rank random matrix almost surely. Besides, we also propose a nuclear norm minimization problem as a surrogate for  problem \eqref{p:minimize_rank}. The most important theoretical result in this work is that we provide a non-asymptotic analysis to bound the error of the nuclear norm minimization problem under a mild assumption. Practically, we propose an efficient algorithm $\text{M}^3\text{O}$ that solves the nuclear norm minimization problem, which overcomes the aforementioned three shortcomings in \citep{yao2021unlabeled}. Notably, $\text{M}^3\text{O}$ works very well even on an extremely difficult scenario, where we need to recover multiple unknown correspondence from the data that are densely permuted and contain missing values. We remark that this is so far a challenging case unexplored in the existing literature.

 \vspace{-0.1cm}
\textbf{Outline. }
For conciseness, we will first study the MRUC problem with one unknown correspondence, and then show that the theoretical results and the algorithm can be readily extended to the more complicated scenarios. We start with building the theoretical results for \eqref{p:minimize_rank} and its convex relaxation in Section \ref{sec:low-rank}. Then, the algorithm is developed in Section \ref{sec:algorithm}. The simulation results are presented in Section \ref{sec:experiment} and the conclusions are drawn in Section \ref{sec:discussion}.

  \textbf{Notations.} 
 Given two matrices $X,Y\in\mathbb{R}^{n\times m}$, we denote $\langle X,Y\rangle=\sum_{i=1}^{n} \sum_{j=1}^{m} X_{ij}Y_{ij}$ as the matrix inner product. We denote $X(i)$ as the $i$th row of the matrix $X$ and $X(i,j)$ as the element at  the $i$th row and the $j$th column. We denote $\mathbf{1}_m\in\mathbb{R}^m$  and $\mathbf{1}_{n\times m}\in\mathbb{R}^{n\times m}$ as the all-one vector and matrix, respectively, and $I_n$ be the $n\times n$ identity matrix. For $\alpha\in\mathbb{R}^m$, $\beta\in\mathbb{R}^n$, we define the operator $\oplus$ as $\alpha\oplus\beta=\alpha\mathbf{1}_n^\top+\mathbf{1}_m\beta^\top\in\mathbb{R}^{m\times n}$. We denote $\|\cdot\|_*$ as the nuclear norm for matrices. For vectors, we denote $\|\cdot\|_0$, $\|\cdot\|_1$ as the zero norm and 1-norm respectively.

\vspace{-0.3cm} 
\section{Matrix Recovery via a Low-rank Model}
\label{sec:low-rank}
\vspace{-0.2cm} 
\textbf{How the matrix rank is perturbed by the row permutation?}
To answer this fundamental question, we first introduce the cycle decomposition of a permutation. 

\begin{definition}[Cycle decomposition of a permutation\citep{dummit1991abstract}] Let $S$ be a finite set and $\pi(\cdot)$ be a permutation on $S$. A cycle $(a_1,...,a_n)$ is a permutation sending $a_j$ to $a_{j+1}$ for $1\leq j\leq n-1$ and $a_n$ to $a_1$. Then a cycle decomposition of $\pi(\cdot)$ is an expression of $\pi(\cdot)$ as a union of several disjoint cycles\footnote{Two cycles are disjoint if they do not have common elements}.
\end{definition}
\vspace{-0.3cm}

It can be verified that any permutation on a finite set has a unique cycle decomposition\citep{dummit1991abstract}. Therefore, we can define the {\it cycle number} of a permutation $\pi(\cdot)$ as the number of disjoint cycles with length greater than 1 in its cycle decomposition, which is denoted as $C(\pi)$. We also define the non-sparsity of a permutation as the Hamming distance to the original sequence, i.e., $H(\pi)=\sum_{s\in S}\mathbb{I}[\pi(s)\neq s]$. It is obvious that $H(\pi)>C(\pi)$ if $\pi$ is not an identity permutation. As a simple example, we consider the permutation $\pi(\cdot)$ that maps the sequence (1,2,3,4,5,6) to (3,1,2,5,4,6). Now the cycle decomposition for it is $\pi(\cdot)=(132)(45)(6)$, and $C(\pi)=2$, $H(\pi)=5$. 

\vspace{-0.1cm}
In all the following theoretical results, we denote the original matrix as $M={\bm[}A,B{\bm ]}\in \Rbb^{n\times m} $  with $A\in \Rbb^{n\times m_A}$, $B\in \Rbb^{n\times m_B}$, and ${\rm rank}(M)=r$, ${\rm rank}(A)=r_A$, ${\rm rank}(B)=r_B$. We denote the corresponding permutation as $\pi_P(\cdot)$ for any permutation matrix $P\in\Pc_n$. The following proposition says that the  perturbation effect of a permutation $\pi(\cdot)$ on the rank of $M$ becomes stronger, if  $\pi(\cdot)$ permutes more rows and contains less cycles.

\vspace{-0.1cm}
\begin{proposition} \label{prop:MR} 
  $\forall P\in\Pc_n$, we have
  \vspace{-0.1cm}
  \begin{align}
      \label{eq:MR1}
      {\rm rank}({\bm [}A,PB{\bm]})\leq \min\{n,m,r_A+r_B,r+H(\pi_P)-C(\pi_P)\}.
  \end{align} 
  \vspace{-0.5cm}
\end{proposition}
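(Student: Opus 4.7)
The plan is to handle the four upper bounds separately. The bounds $n$ and $m$ are trivial since $[A, PB] \in \mathbb{R}^{n \times m}$, and the bound $r_A + r_B$ follows immediately from subadditivity of rank together with the fact that multiplication by a permutation matrix preserves rank, so $\mathrm{rank}(PB) = r_B$. These three are essentially free.

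For the main bound $r + H(\pi_P) - C(\pi_P)$, I would start from the decomposition
\begin{align*}
[A, PB] = [A, B] + [0, (P - I)B],
\end{align*}
which yields via subadditivity
\begin{align*}
\mathrm{rank}([A, PB]) \le \mathrm{rank}([A, B]) + \mathrm{rank}([0, (P - I)B]) \le r + \mathrm{rank}(P - I).
\end{align*}
So it remains to establish the identity $\mathrm{rank}(P - I) = H(\pi_P) - C(\pi_P)$, which is the real content of the proposition.

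To prove this rank identity, I would exploit the cycle decomposition of $\pi_P$. Reordering rows/columns simultaneously by a similarity transform $Q^\top P Q$ with a suitable permutation $Q$ conjugates $P$ into a block-diagonal matrix whose blocks are cyclic shift matrices $P_{\ell_1}, \ldots, P_{\ell_C}$ of sizes equal to the non-trivial cycle lengths, together with a fixed-point block equal to an identity matrix. Since rank is invariant under such a conjugation and the fixed-point block contributes $0$ to $\mathrm{rank}(P - I)$, it suffices to compute $\mathrm{rank}(P_\ell - I_\ell)$ for a single $\ell$-cycle. The key observation is that $(P_\ell - I_\ell)\mathbf{1}_\ell = 0$, so the nullity is at least $1$; conversely, the $\ell - 1$ vectors $e_2 - e_1, e_3 - e_2, \ldots, e_\ell - e_{\ell - 1}$ lie in the column space of $P_\ell - I_\ell$ and are linearly independent, giving $\mathrm{rank}(P_\ell - I_\ell) = \ell - 1$. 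Summing over all non-trivial cycles yields
\begin{align*}
\mathrm{rank}(P - I) = \sum_{i=1}^{C(\pi_P)} (\ell_i - 1) = H(\pi_P) - C(\pi_P),
\end{align*}
since $\sum_i \ell_i$ equals the number of non-fixed points $H(\pi_P)$.

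The main obstacle is the rank computation for the cyclic block, but it is a standard linear-algebra exercise (equivalently one can note that the eigenvalues of $P_\ell$ are the $\ell$-th roots of unity, and exactly one of them equals $1$, forcing nullity $1$). Putting the bounds together gives the stated minimum, completing the proof.
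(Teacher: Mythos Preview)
Your proof is correct and, in fact, cleaner than the paper's. The paper does not use the additive splitting $[A,PB]=[A,B]+[0,(P-I)B]$; instead it fixes a basis $a_1,\dots,a_{r_A}$ of the column space of $A$, extends it to a basis of the column space of $M$, and then manipulates block matrices to arrive at
\[
\mathrm{rank}([A,PB])\le r+\mathrm{rank}\big([Pa_1-a_1,\dots,Pa_{r_A}-a_{r_A}]\big).
\]
To bound the last term, the paper constructs an explicit $(n+C(\pi_P)-H(\pi_P))\times n$ matrix $Y$ (one row per fixed point and one ``all-ones-on-a-cycle'' row per nontrivial cycle), checks that each $Pa_i-a_i$ lies in $\mathrm{Null}(Y)$, and reads off $\dim\mathrm{Null}(Y)=H(\pi_P)-C(\pi_P)$. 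This is really the same fact you prove---$\mathrm{rank}(P-I)=H(\pi_P)-C(\pi_P)$---but reached via a null-space construction rather than your block-diagonalization/eigenvalue computation. Your route is shorter because the decomposition $[A,PB]=[A,B]+[0,(P-I)B]$ bypasses the basis bookkeeping entirely, and the cycle-by-cycle rank count for $P-I$ is immediate once you observe each cyclic block has a one-dimensional kernel spanned by $\mathbf{1}$. The paper's construction of $Y$, on the other hand, is reused later (in the proof of Proposition~\ref{prop:attained_rank}) to identify $\mathrm{Null}(Y)=\mathrm{col}(P-I)$ explicitly, so their slightly heavier machinery pays off downstream.
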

\vspace{-0.3cm}
We have similar result for the case with multiple permutation, which is summarized as the  Corollary \ref{col:multiple_rank} in Appendix \ref{app:proof}. It turns out that, without any assumption on $M$, \eqref{eq:MR1} is the tightest upper bound for the rank of a perturbed matrix. Notably, the following proposition says that the upper bound in \eqref{eq:MR1} is attained with probability 1 for a generic low-rank random matrix.
\begin{definition} A probability distribution on $\Rbb$ is called a proper distribution if its density function $p(\cdot)$ is absolutely continuous with respect the Lebesgue measure on $\Rbb$.
\end{definition}
\begin{proposition}
  \label{prop:attained_rank}
  If the original matrix  $M$ is a random matrix with $M=RE$ where $R\in \Rbb^{n\times r}$ and $E\in\Rbb^{r\times m}$ are two random matrices whose entries are i.i.d and follow a proper distribution on $\Rbb$ , and  $r\leq \min\{\sqrt{\frac n 2},m_A,m_B\}$, then $\forall P\in\Pc_n$, the equality 
  \vspace{-0.1cm}
  \begin{align}
    \label{p:attained_rank}
      {\rm rank}({\bm [}A,PB{\bm]})= \min\{2r,r+H(\pi_P)-C(\pi_P)\}
  \end{align}\vspace{-0.7cm}
  
  holds  with probability 1.
\end{proposition}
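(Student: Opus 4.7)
The plan is to combine the upper bound from Proposition~\ref{prop:MR} with a genericity argument: for each $P\in\Pc_n$, exhibit one realization of $(R,E)$ at which the target rank is attained, and conclude that this happens almost surely because rank-deficiency is defined by the vanishing of polynomials in the entries of $(R,E)$.

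To reduce the problem, note that because $E$ has i.i.d.\ entries from a proper distribution and $m_A,m_B\ge r$, the blocks $E_A$ and $E_B$ each have rank $r$ almost surely. Hence $\mathrm{col}(A)=\mathrm{col}(R)$ and $\mathrm{col}(PB)=\mathrm{col}(PR)$, so $\mathrm{rank}([A,PB])=\mathrm{rank}([R,PR])$ a.s. Applying Proposition~\ref{prop:MR} to $[R,R]$ (of rank $r$) with permutation $P$ gives $\mathrm{rank}([R,PR])\le\min\{n,2r,r+H(\pi_P)-C(\pi_P)\}$; the hypothesis $r\le\sqrt{n/2}$ guarantees $2r\le n$, simplifying the bound to $k^\star:=\min\{2r,r+H(\pi_P)-C(\pi_P)\}$, matching the right-hand side of \eqref{p:attained_rank}.

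For the matching lower bound, fix $P$ and observe that the event $\{\mathrm{rank}([A,PB])<k^\star\}$ is the simultaneous vanishing of all $k^\star\times k^\star$ minors of $[RE_A,PRE_B]$, each a polynomial in the entries of $(R,E)$. If at least one such minor is non-zero as a polynomial, its zero set is a proper algebraic subvariety of Euclidean space with Lebesgue measure zero; absolute continuity of the joint density then gives the event probability zero, and a union bound over the finite set $\Pc_n$ finishes the proof.

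It therefore suffices to construct, for each $P$, one witness $(R_0,E_0)$ with $\mathrm{rank}([A_0,PB_0])=k^\star$. Taking $E_0$ generic so that the reduction to $\mathrm{rank}([R_0,PR_0])$ is exact (e.g., choose $E_A,E_B$ each to contain $I_r$ as a block), the task is to choose $R_0$ based on the cycle decomposition of $\pi_P$. Since rows of $[R,PR]$ indexed by different cycles involve disjoint rows of $R$, each cycle contributes independently to the row span (up to saturation in $\mathbb{R}^{2r}$). A case analysis, separating for each non-trivial cycle the block row span into its intersection with the ``diagonal'' subspace $\Delta:=\{(v,v):v\in\mathbb{R}^r\}\subset\mathbb{R}^{2r}$ (which always contains the sum-of-rows direction of the block) and a complementary piece, and noting that the $n-H$ fixed-point rows all lie in $\Delta$, shows that an explicit $R_0$ attains $k^\star$. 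The main technical obstacle is this combinatorial bookkeeping: tracking how the diagonal and complementary contributions of distinct cycles and of the fixed points combine without unexpected overlaps, and verifying that the condition $n\ge 2r^2$ is exactly what is needed---both in the regime $H-C\le r$ (where $\Delta$ must be filled out via fixed points and cycle sums) and in the regime $H-C>r$ (where the complementary piece must grow to $r$ dimensions).
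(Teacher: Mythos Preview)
Your overall architecture is exactly the paper's: upper bound via Proposition~\ref{prop:MR}, then a polynomial/measure-zero argument reduced to exhibiting, for each $P$, a single witness $(R_0,E_0)$ at which the target rank is attained. Your preliminary reduction of $\mathrm{rank}([A,PB])$ to $\mathrm{rank}([R,PR])$ (using that $E_A,E_B$ have full rank $r$ a.s.) is a clean simplification; the paper achieves the same effect only implicitly by fixing $E_0=[I_r,\mathbf{0}\mid I_r,\mathbf{0}]$.

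The divergence is in the witness construction, and here your proposal remains a sketch. You split on $k:=H(\pi_P)-C(\pi_P)\le r$ versus $k>r$ and propose a diagonal/complementary decomposition of $\mathbb{R}^{2r}$, tracking row-span contributions cycle by cycle. The paper instead splits on $k\le n-r$ versus $k>n-r$ and gives two fully explicit constructions. For $k\le n-r$ it diagonalizes $P-I$ (showing $\mathrm{rank}(P-I)=k$ via $\mathrm{Null}(Y)=\mathrm{col}(P-I)$ with the matrix $Y$ from Proposition~\ref{prop:MR}), lets $\phi_1,\ldots,\phi_k$ be eigenvectors with nonzero eigenvalue and $\phi_{k+1},\ldots,\phi_n$ with eigenvalue zero, and sets column $j$ of $R_0$ to $\phi_{\min\{j,k\}}+\phi_{k+j}$; two column operations then reduce $[R_0,PR_0]$ to $[\phi_{k+1},\ldots,\phi_{k+r},\phi_1,\ldots,\phi_{\min\{r,k\}}]$, of rank $r+\min\{r,k\}$. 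For $k>n-r$ it shows the longest cycle has length at least $H/C\ge n/(n-k)>n/r\ge 2r$ (this chain of inequalities is the \emph{only} place the hypothesis $n\ge 2r^2$ enters, contrary to your remark that it is needed in both regimes), then builds $R_0$ from iterates $u,P_\ast^2u,\ldots,P_\ast^{2r-2}u$ along that cycle so that $[R_0,PR_0]$ contains a full-rank $2r\times 2r$ circulant block.

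Your diagonal/complementary strategy may well be viable, but as written it is not a proof: you have not actually produced $R_0$, and the assertion that distinct cycles ``contribute independently to the row span (up to saturation)'' is precisely the delicate step---disjoint row supports in $R$ do not by themselves prevent the resulting row spans in $\mathbb{R}^{2r}$ from overlapping nontrivially. The paper's global eigenvector and circulant constructions sidestep this bookkeeping entirely by giving $R_0$'s whose rank is read off directly.
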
 
\vspace{-0.3cm}

\textbf{Convex relaxation for the rank function.}
Despite the previous theoretical justification for problem \eqref{p:minimize_rank}, it is non-convex and non-smooth. Another crucial issue is that we often have a noisy observation matrix and it is well known that the rank function is extremely sensitive to the additive noise. In this paper, we assume that the observation matrix is corrupted by i.i.d Gaussian additive noise, i.e.,  \vspace{-0.2cm}$$M_o=[A_o,B_o]=[A,\tilde P B]+W,\text{ where }W(i,j)\sim \Nc(0,\sigma^2),$$\vspace{-0.7cm}

where $\sigma^2$ reflects the strength of the noise. We first denote the singular values of a matrix $X\in\Rbb^{n\times m}$ as $\sigma_X^1,...,\sigma_X^k$ where $k=\min \{n,m\}$. Since  $\text{rank}(X)=\|[\sigma_X^1,...,\sigma_X^k]\|_0$, from Proposition \ref{prop:attained_rank} we can view the perturbation effect  of a permutation to a low-rank matrix as breaking the sparsity of its singular values. This view leads naturally to the well-known 1-norm minimization problem which has been proven  robust to additive noise and can yield a sparse solution \citep{Wright-Ma-2021}, i.e., 
\vspace{-0.1cm}
\begin{align}
  \label{p:nuclear_norm}
  \min_{P\in \Pc_n} \|[A_o,PB_o]\|_*= \|[\sigma_{M_o}^1,...,\sigma_{M_o}^k]\|_1.
\end{align}\vspace{-0.5cm}

Since for arbitrary matrix, the 1-norm of its singular values is equivalent to its nuclear norm, we also ref problem \eqref{p:nuclear_norm} as a nuclear norm minimization problem.

\vspace{-0.1cm}
\textbf{Theoretical justification for the nuclear norm. }
Nuclear norm has a long history  used as a convex surrogate for the rank, and it has been theoretically justified for applications like low-rank matrix completion \citep{candes2010power,Wright-Ma-2021}. It is also important to see whether the nuclear norm is still a good surrogate for the rank minimization problem \eqref{p:minimize_rank}. In this work, we establish a sufficient condition on $A$ and $B$ under which  problem \eqref{p:nuclear_norm} is provably justified for correspondence recovery.  We denote  $A=\sum_{i=1}^{r_A}\sigma_A^i u_A^i v_A^{i\top},\ B=\sum_{i=1}^{r_B}\sigma_B^i u_B^i v_B^{i\top}$ as the singular values decomposition of $A$ and $B$, respectively,  where the $\sigma_A^i$ and $\sigma_B^i$ are the non-zero singular values. 

\vspace{-0.1cm}
Firstly, from the definition of nuclear norm, it can be simply verified for any $P\in \Pc_n$ that 

\vspace{-0.6cm}
\begin{align}
  \label{p:ineq1}
  -Z/N \leq (\|[A,PB]\|_*-\|M\|_*)/\|M\|_*\leq Z/N,
\end{align}
where we denote  $N=\max\{\|A\|_*,\|B\|_*\}$ and $Z=\min\{\|A\|_*,\|B\|_*\}$. The inequality \eqref{p:ineq1} indicates that $A$ and $B$ should have comparable magnitude, i.e., $\|A\|_*\approx \|B\|_*$, otherwise the influence of the permutation will be less significant. Therefore, we typically considers the scenario where the singular values of $A$ and $B$ are comparable, which is summarized as following Assumption \ref{asp:asp1}.
\begin{assumption} There exists a constant $\epsilon_1\geq 0$ such that
  \label{asp:asp1}
  \begin{align}
    \label{p:cond1}
  |\sigma_A^i-\sigma_B^i| \leq \epsilon_1,\ \forall i=1,..,r,
  \end{align} where we denote that $\sigma_A^i=0$ if $i>r_A$, and $\sigma_B^i=0$ if $i>r_B$.
\end{assumption}
\vspace{-0.2cm}
  Similar to the matrix rank, we also need a proper low-rank assumption on the matrix $M$ for the nuclear norm. In this work, we particularly study the scenario that the left singular vectors of $A$ and $B$ are similar, which we formally describe as Assumption \ref{asp:asp2}. We  refer Assumption \ref{asp:asp2} as a proper low-rank assumption, because it indicates that  the column space of $M$ can be approximated by the column space of one of its submatrices. 
 \begin{assumption} There exists a constant $\epsilon_2\geq 0$ such that
  \label{asp:asp2}
  \begin{align}
    \label{p:cond2}
  \|u_A^i-u_B^i\|\leq \epsilon_2, \forall i=1,...,T,
  \end{align}
  where we denote $T=\min\{r_A,r_B\}$.
\end{assumption}
\vspace{-0.2cm}


Furthermore, we also need that all the column singular vectors $u_A^1,...,u_A^T,u_B^1,...,u_B^T$ are not invariant under any $P\in\Pc_n$ with $P\neq I_n$: we define a vector $u\in \Rbb^n$ to be invariant under $P$ if $Pu=u$. One weak condition for a vector $u$ to satisfy such property  is that $u$ dose not contains duplicated elements, which leads to the following Assumption \ref{asp:asp3}.
\begin{assumption} There exists a constant $\epsilon_3\geq 0$ such that
  \label{asp:asp3}
  \begin{align}
    \label{p:cond3}
    \min_{u\in U} \min_{i\neq j} |u(i)-u(j)|\geq \epsilon_3>0,  
  \end{align}where $U=\{u_A^1,...,u_A^T,u_B^1,...,u_B^T\}$.
\end{assumption}
\vspace{-0.2cm}
The necessity of Assumption \ref{asp:asp3} lies in that it excludes scenarios like, for example, $r_B\geq r_A$ and there exists a $P_a\in\Pc_n$ with $P_a\neq I_n$ such that $P_aA=A$, where  we have $$\|[A,P_a^\top B]\|_*=\|[P_aA,P_aP_a^\top B]\|_*=\|M\|_*.$$ In this scenario, $P_a^\top$ is also a permutation matrix, and it is not possible to distinguish  $[A,P_a^\top B]$ and $M$ by the value of their nuclear norm. 


\vspace{-0.1cm}
In summary, the assumptions mentioned above feature a typical low-rank structure in $M$, and implies that the nuclear norm of $M$ is sensitive to permutation.
With the three assumptions, we have the following important theorem, which  provides high probability bound for the error of \eqref{p:nuclear_norm}.

\vspace{-0.1cm}
We denote the solution to \eqref{p:nuclear_norm} as $P^*$, and let $\pi^*$ and $\tilde \pi$ be the corresponding permutation to the  permutation matrices $P^{*\top}$ and $\tilde P$, respectively.  We define the difference between the two permutation $\pi^*$ and $\tilde \pi$ as the {\it Hamming} distance 
\vspace{-0.3cm}
$$d_H(\pi^*,\tilde \pi)\stackrel{\text{def.}}=\sum_{i=1}^n\mathbb{I}(\pi^*(i)\neq \tilde \pi(i)).$$
\vspace{-0.5cm}

\begin{theorem}
  \label{thm:error_bound}
  Under  Assumptions \ref{asp:asp1}, \ref{asp:asp2} and \ref{asp:asp3}, if additionally  $\epsilon_1\leq \frac{M}{4r}$, $\epsilon_2\leq \min\{\frac{1}{2\sqrt{2T}},\frac{\sqrt{2}M}{2N}\}$, and $\sigma \leq \frac{M}{16L^2}$, then the following bound for the Hamming distance
  \begin{align}
    \label{p:error_bound}
    d_{H}(\pi^*,\tilde \pi)\leq \frac{2}{\epsilon_3^2}\left(2-\left(\frac{\sqrt{2}D}{D+(\sqrt{2}+2)\epsilon_1r+\sqrt{2}\epsilon_2N+2\sqrt{2DL\sigma}} - \sqrt{T}\epsilon_2\right)^2 \right)
  \end{align} holds with probability at least $1-2\exp\{-\frac{D}{8L\sigma}\}$, where $L=\max\{n,m\}$, $D=\|A\|_*+\|B\|_*$.
\end{theorem}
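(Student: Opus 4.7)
The strategy is to sandwich $\|[A,QB]\|_*$, where $Q = P^*\tilde P$, between an upper bound coming from the optimality of $P^*$ plus Gaussian noise concentration, and a structural lower bound that grows with $d_H(\pi^*,\tilde\pi) = H(\pi_Q)$. Combining the two forces $d_H$ to satisfy the claimed inequality. First I would set $Q = P^*\tilde P$ and decompose $[A_o, P^* B_o] = [A, QB] + [W_A, P^* W_B]$ and $[A_o, \tilde P^\top B_o] = [A, B] + [W_A, \tilde P^\top W_B]$; both noise blocks are measurable rearrangements of $W$ and hence have the same i.i.d.\ Gaussian distribution. Optimality of $P^*$ followed by two applications of the triangle inequality gives $\|[A, QB]\|_* \leq \|M\|_* + \|[W_A, \tilde P^\top W_B]\|_* + \|[W_A, P^* W_B]\|_*$. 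I would control each noise term using a Gaussian concentration argument applied to the dual representation $\|W\|_* = \max_{\|Z\|_{\mathrm{op}}\leq 1}\langle W, Z\rangle$, obtaining a bound of order $\sqrt{2DL\sigma}$ with probability at least $1-2\exp(-D/(8L\sigma))$, which is precisely the noise quantity appearing in the denominator of the theorem's bound.

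For the lower bound on $\|[A, QB]\|_*$, the idea is to exhibit a dual certificate $Z$ with $\|Z\|_{\mathrm{op}}$ just under $1$ and $\langle [A, QB], Z\rangle$ close to $D/\sqrt{2}$. Guided by the ideal case $\epsilon_1 = \epsilon_2 = 0$, in which $A = U\Sigma V_A^\top$ and $B = U\Sigma V_B^\top$, I would take $Z = \tfrac{1}{\sqrt{2}}[UV_A^\top,\;QUV_B^\top]$. A direct computation gives $\langle [A, QB], Z\rangle = D/\sqrt{2}$ in the ideal case; Assumption~\ref{asp:asp1} and Assumption~\ref{asp:asp2} perturb this by additive errors of order $\epsilon_1 r$ and $\epsilon_2 N$. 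On the operator-norm side, $ZZ^\top = (UU^\top + QUU^\top Q^\top)/2$, whose top eigenvalue I would bound using the per-column identity $\|(I-Q)u_A^i\|^2 \geq d_H\epsilon_3^2$ coming from Assumption~\ref{asp:asp3}; in the rank-one case this directly yields $\|Z\|_{\mathrm{op}}^2 \leq 1 - d_H\epsilon_3^2/4$, and in the rank-$T$ case a vector-by-vector reduction using Assumption~\ref{asp:asp2} introduces the additional $\sqrt{T}\epsilon_2$ correction visible in the theorem.

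Combining the upper and lower bounds, substituting $\|M\|_* \leq D/\sqrt{2} + O(\epsilon_1 r + \epsilon_2 N)$ (an immediate consequence of the first two assumptions), and rearranging to isolate $d_H$ yields the stated inequality: the resulting bound $\sqrt{2 - d_H\epsilon_3^2/2} \geq \sqrt{2}D/(D+\mathrm{err}) - \sqrt{T}\epsilon_2$ is precisely the square root of the theorem's conclusion, and squaring it produces the formula in the statement.

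The main obstacle I anticipate is the operator-norm control of $Z$ when $T>1$: the per-column inequality $\|(I - Q) u_A^i\|^2 \geq d_H\epsilon_3^2$ does not immediately upper-bound $\|U_A^\top Q U_A\|_{\mathrm{op}}$, because $Q$ could act nearly unitarily on the span of $U_A$ while still moving each individual column. The resolution is to use Assumption~\ref{asp:asp2} to identify $U_A$ and $U_B$ up to error $\epsilon_2$, which collapses the block structure in $Z$ and permits a per-vector bound whose accumulated error across the $T$ directions is exactly the $\sqrt{T}\epsilon_2$ term in the theorem. Once this operator-norm step is in hand, the remainder of the argument is routine algebraic manipulation driven by the sandwich.
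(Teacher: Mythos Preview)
Your proposal is essentially the paper's own proof: the dual certificate $Z=[U_AV_A^\top,\,QU_BV_B^\top]$ (up to the $1/\sqrt2$ normalization) is exactly the witness the paper uses to get $\|[A,QB]\|_*\ge D/\|Z\|$, your bound $\|M\|_*\le D/\sqrt2+O(\epsilon_1 r+\epsilon_2 N)$ is the paper's first auxiliary lemma, and your operator-norm step---reduce $U_B$ to $U_A$ at cost $\sqrt T\,\epsilon_2$, then use $|\langle u_A^i,Qu_A^i\rangle|\le 1-H(\pi_Q)\epsilon_3^2/2$---is precisely the paper's second lemma yielding $\|Z\|\le\sqrt{2-H\epsilon_3^2/2}+\sqrt T\,\epsilon_2$. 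The noise is handled identically as well, via $\|W\|_*\le L\|W\|$ and Gaussian operator-norm concentration with the deviation parameter tuned to give the stated probability $1-2\exp\{-D/(8L\sigma)\}$; the obstacle you flag for $T>1$ is exactly the step the paper addresses (in the same way) in that second lemma.
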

\vspace{-0.2cm}
   The proof to the aforementioned theoretical results are provided in Appendix \ref{app:proof}. 
  
   \textbf{Remark 1. }From  Theorem \ref{thm:error_bound} we can see that when $\epsilon_3>0$, and $\epsilon_1\to 0$, $\epsilon_2\to 0$, $\ {\sigma}\to 0$, the error  $d_{H}(\pi^*,\tilde \pi)$ will converge to zero with probability 1. Furthermore, we can also discover that the correspondence can be difficult to recover when: 
   \vspace{-0.2cm}
   \begin{itemize}[leftmargin=*]
    \setlength\itemsep{0.01em}
     \item The rank of original matrix $M$ is high, which can be seen from \eqref{p:error_bound}.
     \item The magnitude  of $A$ and $B$ w.r.t rank or nuclear norm are not comparable, which can be seen from and \eqref{p:ineq1} and \eqref{p:cond1}.
     \item The strength of noise is high, which can be seen from the \eqref{p:error_bound} and the probability in Theorem \ref{thm:error_bound}.
   \end{itemize}
   \vspace{-0.2cm}

   Notably, the numerical experiments in Section \ref{sec:experiment1} corroborate our claim as well.

  \begin{wrapfigure}{r}{4.5cm}
    \vspace{-0.2cm}
    \centering 
      \includegraphics[width=4.5cm]{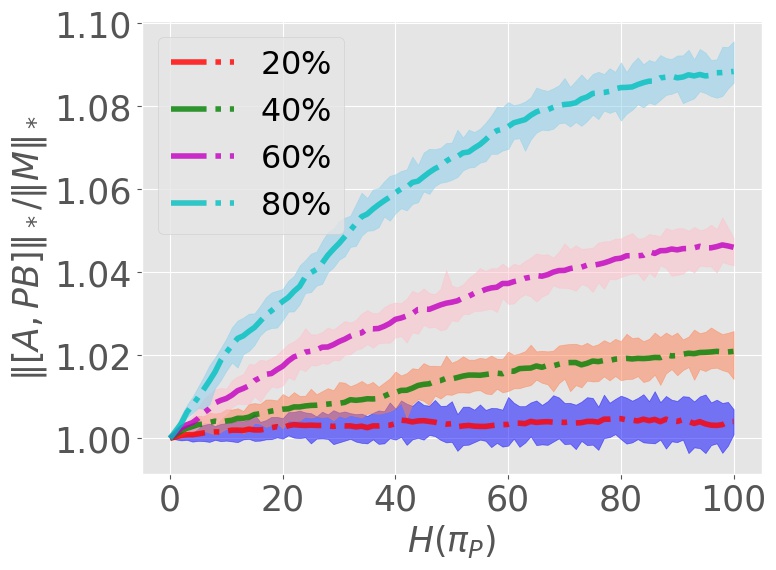}
      \vspace*{-0.6cm}
      \caption{\small The relationship \eqref{p:relation} under different percentages of observable entries.}
      \label{fig:nuclearincrease}
  \vspace{0.4cm}
    \end{wrapfigure}
  \textbf{Remark 2. }Additionally, from the proof of  Theorem \ref{thm:error_bound} we find that the fundamental reason for the success of \eqref{p:nuclear_norm} is that if $M$ satisfies the previous assumptions, we have \vspace{-0.3cm}  \begin{align}
      \label{p:relation}
      \|[A,P B]\|_*/\|M\|_*\approx O\left(\left(1-H(\pi_P)/2n\right)^{-\frac{1}{2}}\right).
    \end{align} 
    \vspace{-0.8cm} 

    In many applications, we can only observe part of the full data. Therefore, it is also worth to investigate that whether \eqref{p:relation} still holds when we can only access a small subset of the entries in $M_o$. Notably, Figure \ref{fig:nuclearincrease} gives the positive answer and shows that the relationship \eqref{p:relation} is gracefully degraded when the percentage of observable entries is decreasing. This phenomenon is remarkable since it indicates the original correspondence can be recovered from only part of the full data. The matrices used to generate  Figure \ref{fig:nuclearincrease} are the same as those in  Section \ref{sec:experiment1}, and the nuclear norm is computed approximately by first filling  the missing entries using Soft-Impute algorithm \citep{mazumder_spectral_2010}.
    



\vspace{-0.5cm}
\section{Algorithm}
\label{sec:algorithm}
\vspace{-0.4cm}
In this section, we consider the scenario with missing values, i.e.,  our observed data is $\Pc_\Omega (M_o)=\Pc_\Omega([A_o,B_o])$, where $\Pc_\Omega$ is an operator that selects  entries that are in the set of observable indices  $\Omega$.
In this scenario,  problem \eqref{p:nuclear_norm} can not be directly used since the evaluation of the  nuclear norm and  optimization of the permutation are coupled together. Inspired by the matrix completion method \citep{hastie_matrix_2015,mazumder_spectral_2010}, we propose to solve an alternative form of \eqref{p:nuclear_norm} as follows, 
\vspace{-0.6cm} 
\begin{align}
  \label{p:unrelaxed}
  \min_{\widehat M\in \Rbb^{n\times m}} \min_{P\in \Pc_n}  &\left\|\Pc_\Omega([A_o, P B_o])-\Pc_\Omega(\widehat M)\right\|_F^2+\lambda\left\|\widehat M\right\|_*,
\end{align} \vspace{-0.5cm} 

where $\lambda>0$ is the penalty coefficient. We denote that $\widehat M=[\widehat M_A,\widehat M_B]$ and $\widehat M_A,\widehat M_B$  are the two submatrices  with the same dimension as $A_o$ and $B_o$ respectively. We can write \eqref{p:unrelaxed} equivalently as
\vspace{-0.2cm} 
\begin{align}
  \label{p:rewrite}
  \min_{\widehat M\in \Rbb^{n\times m}}\min_{P\in \Pc_n} \left\|\Pc_\Omega(A_o) -\Pc_\Omega(\widehat M_A)\right\|_F^2+\langle C(\widehat M_B), P \rangle+\lambda\left\|\widehat M\right\|_*,
\end{align}
\vspace{-0.6cm}

where  $C(\widehat M_B)\in\mathbb{R}^{n\times n}$ is the pairing cost matrix with
\vspace{-0.3cm} 
\begin{align*}
  C(\widehat M_B)(i,j)=\sum_{(j,j'')\in \Omega}\bigg(\widehat M_B(i,j'')-B_o(j,j'')\bigg)^2,~\forall i,j=1,...,n.
\end{align*}
\vspace{-0.6cm} 

\textbf{Baseline algorithm.}
A conventional  strategy to handle an optimization problem like \eqref{p:rewrite} is the alternating minimization or the block coordinate descent algorithm \citep{abid2017linear}. Specifically, it executes the following two updates iteratively until converge.
\vspace{-0.2cm} 
\begin{align}
  \label{p:BCD1}
  &\widehat M^{\text{new}}\leftarrow \underset{\widehat M\in \Rbb^{n\times m}}{\arg\min}\left\|\Pc_\Omega([A_o,\widehat P^{\text{old}} B_o])-\Pc_\Omega(\widehat M)\right\|_F^2+\lambda\left\|\widehat M\right\|_*,\\
  \label{p:BCD2}
  &\widehat P^{\text{new}}\leftarrow \underset{P\in \Pc_n}{\arg\min}\  \langle C(\widehat M^{\text{new}}_B),P\rangle.
\end{align}
\vspace{-0.5cm} 

The first update step \eqref{p:BCD1} is a convex optimization problem and can be solved by the proximal gradient algorithm \citep{mazumder_spectral_2010}. The second update step \eqref{p:BCD2} is actually a discrete optimal transport problem which can be solved by the classical Hungarian algorithm with time complexity $O(n^3)$ \citep{jonker1986improving}. However, as we will see in the Section \ref{sec:experiment}, this algorithm performs poorly, and it is  likely to fall into an undesirable local solution quickly in practice. Specifically, the main reason is that the solution of \eqref{p:BCD2} is often not unique and  a small change in $\widehat M_B$ would lead to large change of $\widehat P$.  To address this issue, we propose a novel and efficient algorithm $\text{M}^3\text{O}$ algorithm based on the entropic optimal transport \citep{peyre2019computational} and min-max optimization \citep{jin2020local}. 

\vspace{-0.2cm}
\textbf{Smoothing the permutation with entropy regularization.}
For any $a\in\mathbb{R}^n,b\in\mathbb{R}^m$, we define \vspace{-0.2cm} \begin{align*}
  \Pi(a,b)=\{S\in \mathbb{R}^{n\times m}:S\mathbf{1}_m=a,S^\top \mathbf{1}_n=b,S(i,j)\geq 0,~\forall i,j\},
\end{align*} \vspace{-0.7cm}

which is also known as the Birkhoff polytope. The famous Birkhoff-von Neumann theorem \citep{birkhoff1946three} states that the set of extremal points of $\Pi(\mathbf{1}_n,\mathbf{1}_n)$ is equal to $\Pc_n$. Inspired by \citep{xie_hypergradient_2020} and the interior point method for linear programming \citep{bertsekas1997nonlinear}, in order to smooth the optimization process of the baseline algorithm, we relax $P$ from being an exact permutation matrix, i.e., to keep $P$ staying inside the Birkhoff polytope $\Pi(\mathbf{1}_n,\mathbf{1}_n)$. That is, we propose to replace the combinatorial problem \eqref{p:BCD2} with the following continuous optimization problem 
\vspace{-0.2cm}
\begin{align}
  \label{p:entropyOT}
  \underset{P\in \Pi(\mathbf{1}_n,\mathbf{1}_n)}{\text{min }} &\langle C(\widehat M_B),P\rangle+\epsilon H(P),
\end{align}\vspace{-0.6cm}

where  $H(P)\stackrel{\text{def.}}=\sum_{i,j}P(i,j)(\log(P(i,j))-1)$ is the matrix negative entropy and $\epsilon>0$ is the regularization coefficient. Notably, \eqref{p:entropyOT} is also known as the Entropic Optimal Transport (EOT) problem \citep{peyre2019computational}, which is a strongly convex optimization problem and can be solved  roughly in the $O(n^2)$ complexity by the Sinkhorn algorithm. Specifically, the Sinkhorn algorithm actually solves the dual problem of \eqref{p:entropyOT}, \vspace{-0.3cm}  \begin{align}\label{EOT dual}
	\max _{\alpha,\beta \in \mathbb{R}^{n}}  W_\epsilon(\widehat M_B,\alpha,\beta)\stackrel{\text{def.}}=\left\langle\mathbf{1}_{n}, \alpha\right\rangle+\left\langle\mathbf{1}_{n}, \beta\right\rangle-\epsilon\bigg\langle  \mathbf{1}_{n\times n} , \text{exp}\bigg\{\frac{\alpha\oplus\beta-C(\widehat M_B)}{\epsilon}\bigg\}\bigg\rangle,
\end{align} \vspace{-0.5cm} 

which reduces the variables dimension from $n^2$ to $2n$ and is thus greatly favorable in the high dimension scenario. By substituting the inner minimization problem of \eqref{p:rewrite} with \eqref{p:entropyOT}, we end up with solving the following  unconstrained min-max optimization problem 
\vspace{-0.1cm} 
\begin{align}
  \label{p:entropydual}
  \underset{\widehat M}{\min}\underset{\alpha,\beta}{\ \max} \left\|A-\widehat M_A\right\|_F^2+W_\epsilon(\widehat M_B,\alpha,\beta)+\lambda\left\|\widehat M\right\|_*&.
\end{align}
\vspace{-0.5cm} 

Follows the idea of \citep{jin2020local}, we consider to adopt a proximal gradient algorithm with a Max-Oracle for \eqref{p:entropydual}.
Specifically,  we employ the Skinhorn algorithm \citep{peyre2019computational} as the  Max-Oracle to retrieve an $\varepsilon$-good solution of the inner max problem \eqref{EOT dual}.
We summarize our proposed algorithm  $\text{M}^3\text{O}$ (\textbf{M}atrix recovery via \textbf{M}in-\textbf{M}ax \textbf{O}ptimization) in Algorithm \ref{alg:McubicO}, where $\text{prox}_{\lambda\left\|\cdot\right\|_*}(\cdot)$ is the proximal operator of nuclear norm and $\rho_k$ is the gradient stepsize. The convergence property of $\text{M}^3\text{O}$ can be obtained by following \citep{jin2020local}, which shows that, with a decaying stepsize, $\text{M}^3\text{O}$ is bound to converge to an $\varepsilon$-good Nash equilibrium within $O(\varepsilon^{-2})$ iterations. 

\vspace{-0.4cm}
\begin{algorithm}
  \caption{$\text{M}^3\text{O}$}\label{alg:McubicO}
  \While{\rm not converged}{
    For the tolerance $\varepsilon$, run the Sinkhorn algorithm to find $\alpha^*$, $\beta^*$ such that $$W_\epsilon(\widehat M_B^{k},\alpha^*,\beta^*)>\max_{\alpha,\beta}\ W_\epsilon(\widehat M_B^{k},\alpha,\beta)-\varepsilon;$$

  Perform
  $\widehat M^{\text{k+1}} \leftarrow \text{prox}_{\lambda\left\|\cdot\right\|_*} (\widehat M^{k}-\rho_k\nabla_{\widehat M} F_\epsilon(\widehat M^{k},\alpha^*,\beta^*)),$ where $$F_\epsilon(\widehat M,\alpha,\beta)\stackrel{\text{def.}}=\left\|A-\widehat M_A\right\|_F^2+W_\epsilon(\widehat M_B,\alpha,\beta);$$
   }
  \end{algorithm}
 
  \vspace{-0.4cm}

\textbf{Remark 3. }A recent work \citep{pmlr-v115-xie20b} proposes a decaying strategy for the entropy regularization coefficient $\epsilon$ in \eqref{p:entropyOT} so that the optimal solutions of \eqref{p:BCD2} and \eqref{p:entropyOT} do not deviate too much. Inspired by it, in our practice, we take large $\epsilon$ in the beginning and gradually shrink it by half until the objective function stops improving for $K$ steps. 

\vspace{-0.15cm}
\textbf{Remark 4. }A useful trick is that we should not take large stepsize in the early  iterations because the permutation matrix could still be far away from the optimal one. However,  a small stepsize would lead to slow convergence. Heuristically, we propose an adaptive stepsize strategy that performs  well in practice. For the   solution of (\ref{p:entropyOT}) $\widehat P_k$ at the $k$th iteration, we  compute the two statistics \vspace{-0.1cm} $$\delta_k=\left\|\widehat P_{k-1}-\widehat P_{k}\right\|_F^2/2n \text{ and }
  c_k=\left\|\text{max}_j \widehat P_k(\cdot,j)-\mathbf{1}_n\right\|_1/n.$$ \vspace{-0.6cm} 

  Here $\delta_k$ represents how fast the permutation matrix $\widehat P_k$ changes over the iterations, while $c_k$ measures how far the current $\widehat P_k$ is  close to an exact permutation matrix. Both $\delta_k$ and $c_k$  reflect the confidence on the current found correspondence.  Based on them,  we set the stepsize as $\rho_{k+1}=(1-\delta_k)(1-c_k)^\omega,$  where $\omega>0$ is a tunable parameter which is often set between 0.5 to 3. $\omega$ actually trades off the convergence speed and final performance. The smaller the $\omega$, the faster the convergence. Therefore, a practical way is to start with a small $\omega$, and gradually increase it until the final performance stops improving. 



\vspace{-0.15cm}
\textbf{Remark 5. }As discussed in Section \ref{sec:intro}, in many cases we have to deal with the problem that involves multiple correspondence, i.e., we need to recover $[A,B_1,...,B_d]$ from $[A,\tilde P_1B_1,...,\tilde P_dB_d]$. An important observation is that, although the number of possible correspondence increase exponentially as $d$ grows, the complexity of   M$^3$O per iteration only linearly increase with $d$ and can be implemented in a fully parallel fashion. Specifically, in this scenario, we solve the  problem
\vspace{-0.2cm} 
\begin{align}
  \label{p:multiple}
  \min_{\widehat M}\min_{P_1,...,P_d}&\left\|\Pc_{\Omega}(A) - \Pc_{\Omega}(\widehat M_A)\right\|_F^2+\sum_{l=1}^{d}\bigg\{\langle C_\Omega(\widehat M_{B_l}), P_l \rangle+\epsilon H(P_l)\bigg\}+\lambda\left\|\widehat M\right\|_*,\\
  &\text{s.t. }P_l \in \Pi(\mathbf{1}_n,\mathbf{1}_n),\ l=1,...,d,\notag
\end{align}
\vspace{-0.6cm} 

where we denote $\widehat M=[\widehat M_A,\widehat M_{B_1},...,\widehat M_{B_d}]$. Here $\widehat M_A$ and $\widehat M_{B_l}$ have the same dimension with $A$ and $\tilde B_l$, respectively. 
We refer (\ref{p:multiple}) as the $\bm{d}$\textbf{-correspondence} problem.  One can find that the inner problems for solving $P_l$ are actually decoupled for each $l$, which guarantees  an efficient parallel implementation. 

\vspace{-0.15cm}
\textbf{Remark 6. }Since the problem \eqref{p:unrelaxed} has a similar form to that considered in \citep{mazumder_spectral_2010}. We adopt the same tuning strategy of $\lambda$ proposed in \citep{mazumder_spectral_2010}, which suggests that we should start with large $\lambda$ and gradually decrease it.

\vspace{-0.2cm}
We relegate more details about $\text{M}^3\text{O}$ to Appendix \ref{app:algo}.

\vspace{-0.4cm}
\begin{wrapfigure}{r}{8cm}
  \vspace{-1cm}
  \subfigure[Objective value]{
    \label{fig:exp1_Obj}
  \includegraphics[width=3.7cm]{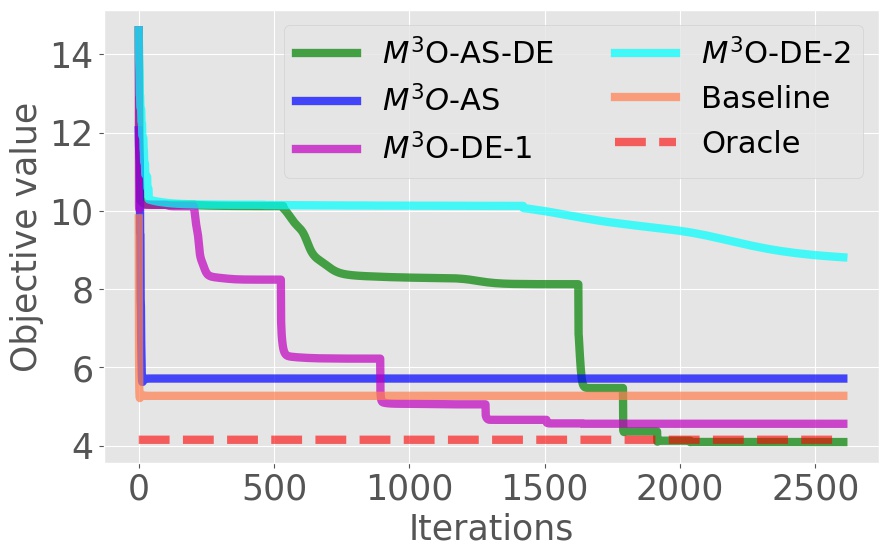}
  }
  \subfigure[Permutation error]{
    \label{fig:exp1_Perr}
  \includegraphics[width=3.7cm]{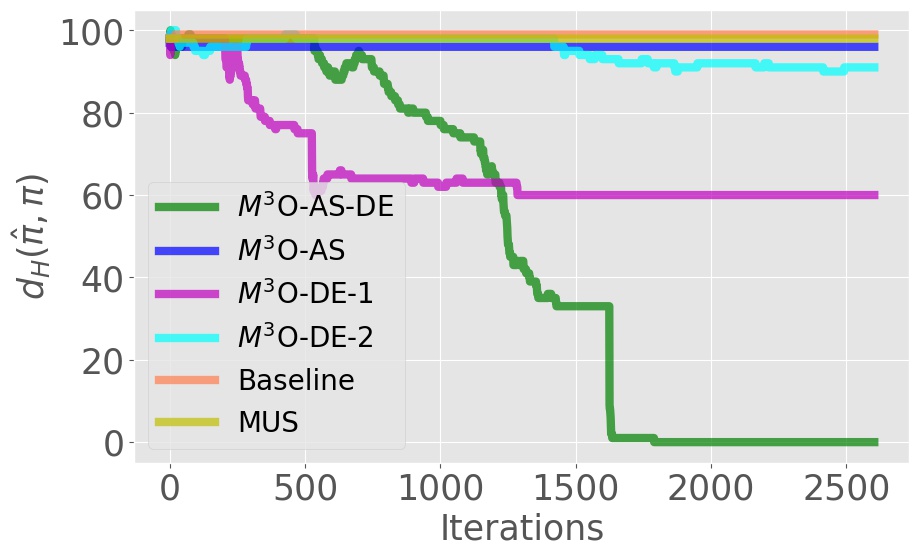}
  }
  \vspace*{-0.3cm}
  \caption{\small Performance of various algorithms on a simulated 1-correspondence problem.}\label{fig:exp1}\vspace{-0.5cm}
  \label{fig:multi_algo}
  \end{wrapfigure}
\section{Experiments}
\label{sec:experiment}
\vspace{-0.3cm}
In this section, we evaluate our proposed $\text{M}^3\text{O}$ on both synthetic and real-world datasets, including the MovieLens 100K and the Extended Yale B dataset. We also provide an ablation study for the decaying entropy regularization strategy and the  adaptive stepsize strategy proposed in Remarks 3 and 4. In all the experiments, we employ the Soft-Impute algorithm \citep{mazumder_spectral_2010} as a standard algorithm for matrix completion.   Extra experiment details and auxiliary results can be
found in Appendix \ref{app:exp}.

\vspace{-0.1cm}
\textbf{Algorithms.} We denote the following algorithms for comparison in all the experiments:
\vspace{-0.2cm}
\begin{enumerate}[style=sameline,itemindent=0em,leftmargin=20pt]
   \item  \textit{Oracle}:  Running the Soft-Impute algorithm  with ground-truth correspondence.
  \item \textit{Baseline}: The Baseline algorithm in \eqref{p:BCD1} and \eqref{p:BCD2}.
  \item \textit{MUS}: Since there is currently no existing algorithm directly applicable the scenario considered by \eqref{p:multiple}, inspired by \citep{yao2021unlabeled}, we adapt the algorithm in \citep{zhang2020optimal}, which is originally proposed for the MUS problem, to deal with the MRUC problem. The details of the adapted algorithm are provided in Appendix \ref{app:adapt}.
\end{enumerate}
\vspace{-0.3cm}

\vspace{-0.2cm}
\subsection{Synthetic data}
\label{sec:experiment1}
\vspace{-0.2cm}
We first investigate the property of our proposed $\text{M}^3\text{O}$  algorithm on the synthetic data.
\vspace{-0.1cm}

\textbf{Data generation.}
We generate the original data matrix in this form $M=R E+\eta W, $ where $R\in\mathbb{R}^{n\times r}$, $E \in \mathbb{R}^{r\times m}$, $W\in \mathbb{R}^{n\times m}$ and $\eta >0$ indicates the strength of the additive noise. The entries of $R$, $E$, $W$ are all i.i.d sampled from the  $\Nc (0,1)$. Then we split the data matrix $M$ by  $M=[A,B_1,...,B_d]$ where we denote $A\in \mathbb{R}^{n\times m_A}$, $B_1\in \mathbb{R}^{n\times m_1}$, ..., $B_d\in \mathbb{R}^{n\times m_d}$ to represent data from $d+1$ data sources. The permuted observation matrix $M_o$ is obtained by first generating $d$  permutation matrices $P_1,...,P_d$ randomly and independently, and then computing $ M_o=[A, P_1  B_1,..., P_d  B_d]$. Finally, we remove $(1-|\Omega|\cdot 100\%/(n\cdot m))$ percent of the entries of $M_o$ randomly and uniformly, where  $|\Omega|$ indicating the number of observable entries.  

\vspace{-0.1cm}
  \textbf{Ablation study.} We denote the following variants of $\text{M}^3\text{O}$ for the ablation study.

  \vspace{-0.3cm}
  \begin{enumerate}[style=sameline,itemindent=0em,leftmargin=20pt]
    {\item \textit{$\text{M}^3\text{O}$-AS-DE}:  $\text{M}^3\text{O}$  with both {A}dpative {S}tepsize and {D}ecaying {E}ntropy regularization. }
    {\item \textit{$\text{M}^3\text{O}$-DE}: $\text{M}^3\text{O}$ with Decaying Entropy regularization only. $\text{M}^3\text{O}$-DE-1 and $\text{M}^3\text{O}$-DE-2 adopt constant stepsize $\rho_k=0.5$ and $\rho_k=0.01$, respectively.}
    {\item \textit{$\text{M}^3\text{O}$-AS}: $\text{M}^3\text{O}$  with {A}dpative {S}tepsize only. The entropy coefficient $\epsilon$ is fixed to 0.0005.} 
  \end{enumerate}

  \vspace{-0.3cm}
  In the following results, we denote $\pi_l$ as the corresponding permutation to $ P_l$.  We initialize $\widehat M$ from Gaussian distribution for the $\text{M}^3\text{O}$ algorithm and its variants. We choose initial $\epsilon$ as 0.1 and $K=100$ as the default for the decaying entropy regularization, and set $\omega=3$ as the default for the adaptive stepsize. We also report  the achieved objective values of \eqref{p:multiple} for the tested algorithms, except for the MUS algorithm since it has a different objective. 

  \vspace{-0.1cm}
  \textbf{Results.}   Figure \ref{fig:multi_algo} displays the result under the setting  $\eta=0.1$, $|\Omega|\cdot 100\%/(n\cdot m)=80\%$, $n=m=100$, $r=5$, $d=1$, $m_A=60$ and $m_1=40$.  The algorithm {M}$^3$O-AS-DE achieves the best result, and can recover the ground-truth correspondence. {M}$^3$O-AS behaves similarly to Baseline and MUS. They all converge to a poor local solution quickly. {M}$^3$O-DE-1 converges quickly and also falls into a local solution due to large stepsize, while  {M}$^3$O-DE-2 adopts a small stepsize and hence suffers from slow convergence. Due to the superiority of {M}$^3$O-AS-DE over the other variants, in the following results, we refer {M}$^3$O as {M}$^3$O-AS-DE for short.
  
  Figure \ref{fig:various_setting} examine {M}$^3$O under different regimes w.r.t $|\Omega|$, $\eta$, $r$ and $m_A/n$. As we can see, the results are well aligned with our prediction in Remark 1 and Remark 2.

  Finally, we examine {M}$^3$O on the general d-correspondence problem. See Table \ref{tb:multi-perm} for various results, where we set $r=5$ and $\varepsilon=0.1$. Notice that for the 4-correspondence problem in the table, there are $(100!)^4$ possible correspondence. Even for such a difficult problem, {M}$^3$O is able to recover 61.5\% of the ground-truth correspondence with a good initialization. 
  \vspace{-0.3cm}
    \begin{figure}[htbp]
      \centering
      \subfigure[ $d_H$ v.s. $|\Omega|$]{
        \label{fig:exp2_Obj}
      \includegraphics[width=2.7cm]{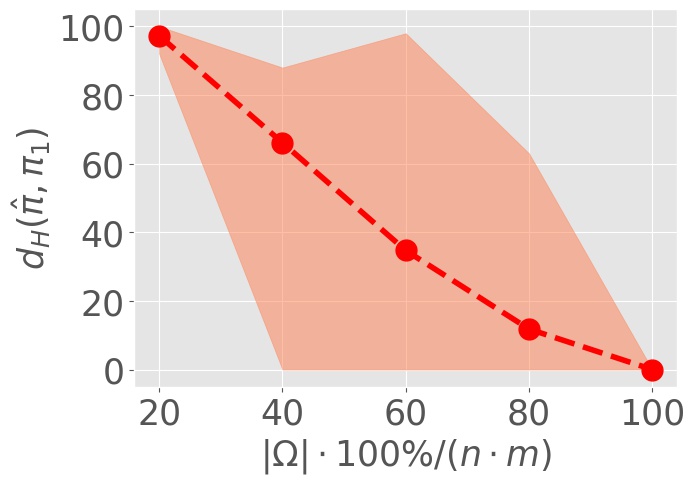}
      }
      \subfigure[ $d_H$ v.s. $\eta$]{
        \label{fig:exp3_Perr}
      \includegraphics[width=2.7cm]{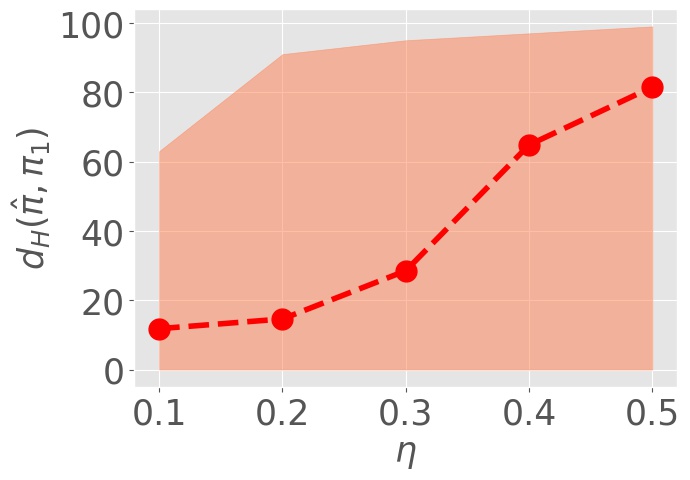}
      }
      \quad
      \centering
      \subfigure[ $d_H$ v.s. $r$]{
        \label{fig:exp3_r}
      \includegraphics[width=2.7cm]{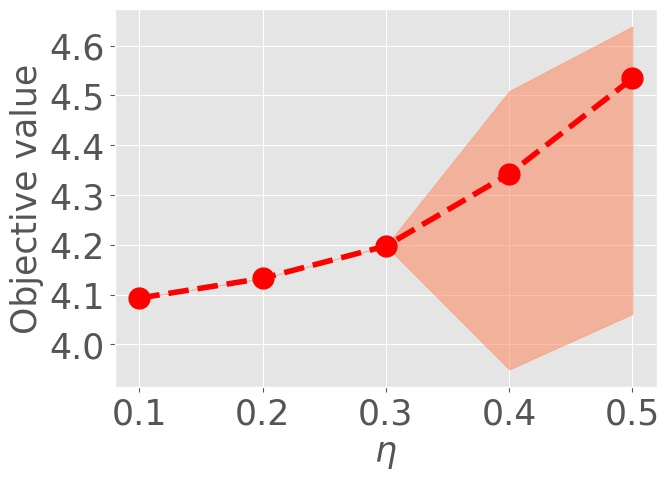}
      }
      \quad
      \subfigure[ $d_H$ v.s. $m_A/n$]{
        \label{fig:exp3_ma}
      \includegraphics[width=2.7cm]{figures/exp3_Perr.jpg}
      }\vspace*{-0.4cm}
      \caption{\small Performance of {{M}$^3$O} on a 1-correspondence problem under different $|\Omega|$, $\eta$, $r$ and $m_A/n$. The default setting is $|\Omega|\cdot 100\%/(n\cdot m)=80\%$, $\eta=0.1$, $n=m=100$, $r=5$, $m_A=60$,  and $m_1=40$. The mean  with standard deviation are calculated from 10 different random initializations.}
      \label{fig:various_setting}\vspace{-0.3cm}
      \end{figure}
\vspace{-0.2cm}
      \begin{table}[htbp]
        \centering
        \small
        \caption{\small Performance of {M}$^3$O for various d-correspondence problems. The normalized permutation error $\sum_{l=1}^d d_H(\hat \pi_l,\pi_l)/d$  is reported as mean$\pm$std (min) over 10 different random initializations.}
        \label{tb:multi-perm}
        \vspace*{-0.2cm}
        \begin{tabular}{ccccc}
        \toprule
        $(n,m_A,m_1,...,m_d)$  & $d$ & $|\Omega|\cdot 100\%/(n\cdot m)$  & $\sum_{l=1}^d d_H(\hat \pi_l,\pi_l)/d$ \\
        \midrule
        (100,40,30,30)& 2 & 40\% & $33.35\pm32.85$ (0.00)\\
        (100,20,40,40)& 2 & 40\% & $58.90\pm27.21$ (2.00)\\
        (100,45,25,25,25)& 3 & 50\% & $61.97\pm15.41$ (37.33)\\
        (100,40,25,25,25,25)& 4 & 60\% & $59.90\pm13.64$ (38.50)\\
        \bottomrule
        \end{tabular}\vspace{-0.3cm}
        \end{table}

        \vspace{-0.3cm}
        \subsection{Multi-domain recommender system without correspondence}
        \vspace{-0.2cm}

In this section, we study the performance of $\text{M}^3\text{O}$ on a real world dataset MovieLens 100K\footnote{https://grouplens.org/datasets/movielens/100k/}, which is a widely used movie recommendation dataset \citep{harper2015movielens}. In this application, we mainly focus on the metric Root Mean Squared Error (RMSE), i.e., 

\vspace{-0.45cm}
 $$\text{RMSE}\stackrel{\text{def.}}=\sqrt{\frac1N\sum_{i,j}(\widehat M_{ij}-M_{ij})^2}.$$
 \vspace{-0.45cm}

\textbf{Data.} MovieLens 100K contains 100,000 ratings within the scale 1-5. The ratings are given by 943 users on 1,682 movies. Genre information about movies is also provided. We adopt a similar setting with \citep{zhang2012multi}. We extract five most popular genres, which are Comedy, Romance,  Drama,  Action,  Thriller respectively, to define the data from 5 different domains (or platforms). In addition to \citep{zhang2012multi}, we randomly permute the indexes of the users from these five domains respectively, so that the correspondence among these data become unknown. In this way, the problem belongs to the 4-correspondence problem as discussed before. The ratings are split randomly, with 80\% of them as the training data and the other 20\% of them as the test data. 

\vspace{-0.1cm}
\textbf{Algorithms.} We consider the following additional algorithms for comparison. 

\vspace{-0.3cm}
\begin{enumerate}[style=sameline,itemindent=0em,leftmargin=20pt]
  {\item \textit{SIC}: Running the Soft-Impute algorithm independently for the 5 different platforms. }
  {\item \textit{SIR}: Running the Soft-Impute algorithm  with Randomly generated correspondence.}
\end{enumerate}

\vspace{-0.3cm}
\textbf{Results.}
As discussed in experiments on the simulated data, the exact recovery of correspondence becomes impossible due to the small amount of observable entries. Therefore, in the following experiment, since exact correspondence is not needed, we fix $\epsilon=0.05$ for $\text{M}^3\text{O}$. Table \ref{tb:federated} shows the results by averaging the RMSE on the test data over 10 different random seeds. 

\vspace{-0.2cm}
We can first see that the matrix completion with the randomly generated correspondence, i.e., SIR, can be harmful to the overall performance. Besides,  although the ground-truth correspondence can not be recovered, each platform  can still benefit from $\text{M}^3\text{O}$ since it improves the performance over SIC. This is mainly because $\text{M}^3\text{O}$ is still able to correspond similar users for inferring missing ratings. On the contrary, since both Baseline and MUS can only establish an exact one-to-one correspondence for each user, they fail to improve SIC significantly. Remarkably, $\text{M}^3\text{O}$ is only inferior to the Oracle method a little, and even achieves lower test RMSE than the Oracle method on the Comedy genre. 
\vspace{-0.2cm}
  \begin{table}[htbp]
    \small
    \centering
    \caption{  Test RMSE of various algorithms on MovieLens 100K}\label{tb:federated}
    \vspace*{-0.3cm}
    \begin{tabular}{ccccccc}
    \toprule
    Method & Comedy & Romance & Drama & Action & Thriller & Total\\
    \midrule
    SIR&1.0202  &  1.0158 &   0.9808  &  0.9803  &  0.9811  &  0.9944\\
    
    SIC&0.9694  &  0.9695 &   0.9317 &   0.9175  &  0.9253 &   0.9418\\
    MUS&0.9659  &  0.9842 &   0.9423  &  0.9305  &  0.9306  &  0.9485\\
      Baseline &0.9728& 0.9562& 0.9379& 0.9105  & 0.9145&
          0.9395 \\
    $\text{M}^3\text{O}$& \textbf{0.9389} & \textbf{0.8787} & \textbf{0.9139} & \textbf{0.8556} & \textbf{0.8567} & \textbf{0.8948}\\
    Oracle& 0.9444   & 0.7825 &   0.9058 &   0.8176  &  0.8098 &   0.8667\\
    \bottomrule
    \end{tabular}
    \vspace{-0.3cm}
    \end{table}
\vspace{-0.0cm}

  \subsection{Visual permutation recovery}
  \vspace{-0.2cm}
  \begin{wrapfigure}{r}{5cm}
    \vspace{-0.6cm}
    \centering
  \subfigure[ Original]{
    \label{fig:real_face}
  \includegraphics[width=2cm]{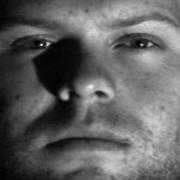}
  }
  \quad
  \subfigure[Corrupted]{
    \label{fig:perm_face}
  \includegraphics[width=2cm]{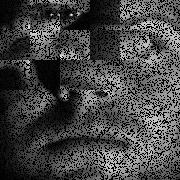}
  }
  \quad
  \centering
  \subfigure[Baseline]{
    \label{fig:base_face}
  \includegraphics[width=2cm]{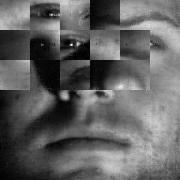}
  }
  \quad
  \subfigure[$\text{M}^3\text{O}$]{
    \label{fig:Mcubic_face}
  \includegraphics[width=2cm]{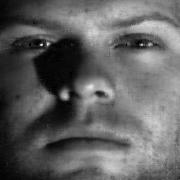}
  }
  \vspace*{-0.4cm}
  \caption{\small Performance of {{M}$^3$O} on a face recovery problem.}
  \label{fig:face}\vspace{-0.5cm}
    \end{wrapfigure}
  We show that  M$^3$O  is flexible and can also be used to recover matrix that is not in the form $[A,PB]$. We can see this from the problem formulation in \eqref{p:rewrite}, where the cost matrix $C(\cdot)$ can be constructed in other ways as long as it is a function of a permutation. Typically, M$^3$O  can be used to solve a challenging face image recovery problem. The original face image with size $180\times 180$ in Figure \ref{fig:real_face} comes from the Extend Yale B database \citep{GeBeKr01}. The corrupted image is visualized in Figure \ref{fig:perm_face}, where the  pixel blocks with size $30\times 30$ in the upper left are shuffled randomly, and $30\%$ of the total pixels are removed. 
  This kind of problem is recently considered in \citep{santa2017deeppermnet}, which proposes to recover the corrupted image in a data-driven way using convolutional neural networks. However, we show that it is possible to recover the image without additional data by merely exploiting the underlying low-rank structure of the image itself.

  This experiment setting is similar to that in  \citep{yao2021unlabeled} but the algorithm in \citep{yao2021unlabeled} can not be applied since it can not work with the missing values. The  MUS algorithm is also not applicable since this problem can not be written in the form of linear regression problem. From Figure \ref{fig:base_face} and \ref{fig:Mcubic_face} we can find that $\text{M}^3\text{O}$ performs better than the Baseline, and can even recover the original orders of pixel blocks. More results similar to the Figure \ref{fig:face} and experiment details are provided in Appendix \ref{app:exp}.


      \vspace{-0.3cm}
\section{Conclusion}
\label{sec:discussion}
\vspace{-0.3cm}
 This paper studies an important matrix recovery problem that is less probed by the existing literature. Unlike  the classical matrix completion problem, in this problem setting, part of the observation matrix is shuffled. Apart from the two applications we have studied in this paper, this problem can arise in more scenarios like  the gnome assembly problem \citep{huang1999cap3}, the video pose tracking problem \citep{ganapathi2012real} and the privacy-aware sensor networks \citep{gruteser2003privacy}, etc. We believes that our work provides a general aolution to deal with correspondence issue in these scenarios.

 Theoretically, this paper is the first to rigorously study the role of low-rank model in the MRUC problem, and is also the first to show that minimizing nuclear norm is provably correct for recovering a typical low-rank matrix. In practice, we propose a highly efficient algorithm, the {$\text{M}^3\text{O}$} algorithm, which consistently achieves the best performance over several baselines in all the tested scenarios. 

 As we have shown in Figure \ref{fig:various_setting}, one major limit of our algorithm is the sensitivity to the initialization. The phenomenon is exacerbated when the additive noise is high or the numbers of observable entries are small. We suggest to try with a few different initialization strategy when applying {$\text{M}^3\text{O}$} to a specific task. Finding stable initialization strategy is also an important task for our future works.
%
  



\bibliography{iclr2022_conference}

\begin{thebibliography}{70}
\providecommand{\natexlab}[1]{#1}
\providecommand{\url}[1]{\texttt{#1}}
\expandafter\ifx\csname urlstyle\endcsname\relax
  \providecommand{\doi}[1]{doi: #1}\else
  \providecommand{\doi}{doi: \begingroup \urlstyle{rm}\Url}\fi

\bibitem[Unnikrishnan et~al.(2018)Unnikrishnan, Haghighatshoar, and
  Vetterli]{unnikrishnan2018unlabeled}
Jayakrishnan Unnikrishnan, Saeid Haghighatshoar, and Martin Vetterli.
\newblock Unlabeled sensing with random linear measurements.
\newblock \emph{IEEE Transactions on Information Theory}, 64\penalty0
  (5):\penalty0 3237--3253, 2018.

\bibitem[Gruteser et~al.(2003)Gruteser, Schelle, Jain, Han, and
  Grunwald]{gruteser2003privacy}
Marco Gruteser, Graham Schelle, Ashish Jain, Richard Han, and Dirk Grunwald.
\newblock Privacy-aware location sensor networks.
\newblock In \emph{HotOS}, volume~3, pages 163--168, 2003.

\bibitem[Das and Lee(2018)]{das_sample--sample_2018}
Debasmit Das and C.~S.~George Lee.
\newblock Sample-to-{Sample} {Correspondence} for {Unsupervised} {Domain}
  {Adaptation}.
\newblock \emph{Engineering Applications of Artificial Intelligence},
  73:\penalty0 80--91, August 2018.
\newblock ISSN 09521976.
\newblock \doi{10.1016/j.engappai.2018.05.001}.
\newblock URL \url{http://arxiv.org/abs/1805.00355}.
\newblock arXiv: 1805.00355.

\bibitem[Ji et~al.(2014)Ji, Li, Salzmann, and Dai]{ji_robust_2014}
Pan Ji, Hongdong Li, Mathieu Salzmann, and Yuchao Dai.
\newblock Robust motion segmentation with unknown correspondence.
\newblock In \emph{European conference on computer vision}, pages 204--219.
  Springer, 2014.

\bibitem[Zeng et~al.(2012)Zeng, Chan, Jia, and Xu]{zeng_finding_2012}
Zinan Zeng, Tsung-Han Chan, Kui Jia, and Dong Xu.
\newblock Finding correspondence from multiple images via sparse and low-rank
  decomposition.
\newblock In \emph{European {Conference} on {Computer} {Vision}}, pages
  325--339. Springer, 2012.

\bibitem[Zhou et~al.(2015)Zhou, Zhu, and Daniilidis]{zhou_multi-image_2015}
Xiaowei Zhou, Menglong Zhu, and Kostas Daniilidis.
\newblock Multi-image matching via fast alternating minimization.
\newblock In \emph{Proceedings of the {IEEE} {International} {Conference} on
  {Computer} {Vision}}, pages 4032--4040, 2015.

\bibitem[Chan and Loh(2001)]{chan2001file}
Hock-Peng Chan and Wei-Liem Loh.
\newblock A file linkage problem of degroot and goel revisited.
\newblock \emph{Statistica Sinica}, pages 1031--1045, 2001.

\bibitem[Yang et~al.(2020)Yang, Tan, Zheng, Chen, and Yang]{yang2020federated}
Liu Yang, Ben Tan, Vincent~W Zheng, Kai Chen, and Qiang Yang.
\newblock Federated recommendation systems.
\newblock In \emph{Federated Learning}, pages 225--239. Springer, 2020.

\bibitem[Wright and Ma(2021)]{Wright-Ma-2021}
John Wright and Yi~Ma.
\newblock \emph{High-Dimensional Data Analysis with Low-Dimensional Models:
  Principles, Computation, and Applications}.
\newblock Cambridge University Press, 2021.

\bibitem[Mazumder et~al.(2010)Mazumder, Hastie, and
  Tibshirani]{mazumder_spectral_2010}
Rahul Mazumder, Trevor Hastie, and Robert Tibshirani.
\newblock Spectral regularization algorithms for learning large incomplete
  matrices.
\newblock \emph{The Journal of Machine Learning Research}, 11:\penalty0
  2287--2322, 2010.
\newblock Publisher: JMLR. org.

\bibitem[Hastie et~al.(2015)Hastie, Mazumder, Lee, and
  Zadeh]{hastie_matrix_2015}
Trevor Hastie, Rahul Mazumder, Jason~D. Lee, and Reza Zadeh.
\newblock Matrix completion and low-rank {SVD} via fast alternating least
  squares.
\newblock \emph{The Journal of Machine Learning Research}, 16\penalty0
  (1):\penalty0 3367--3402, 2015.
\newblock Publisher: JMLR. org.

\bibitem[Schafer et~al.(2007)Schafer, Frankowski, Herlocker, and
  Sen]{schafer2007collaborative}
J~Ben Schafer, Dan Frankowski, Jon Herlocker, and Shilad Sen.
\newblock Collaborative filtering recommender systems.
\newblock In \emph{The adaptive web}, pages 291--324. Springer, 2007.

\bibitem[Zhang et~al.(2012)Zhang, Cao, and Yeung]{zhang2012multi}
Yu~Zhang, Bin Cao, and Dit-Yan Yeung.
\newblock Multi-domain collaborative filtering.
\newblock \emph{arXiv preprint arXiv:1203.3535}, 2012.

\bibitem[Santa~Cruz et~al.(2017)Santa~Cruz, Fernando, Cherian, and
  Gould]{santa2017deeppermnet}
Rodrigo Santa~Cruz, Basura Fernando, Anoop Cherian, and Stephen Gould.
\newblock Deeppermnet: Visual permutation learning.
\newblock In \emph{Proceedings of the IEEE Conference on Computer Vision and
  Pattern Recognition}, pages 3949--3957, 2017.

\bibitem[Pananjady et~al.(2017{\natexlab{a}})Pananjady, Wainwright, and
  Courtade]{pananjady2017denoising}
Ashwin Pananjady, Martin~J Wainwright, and Thomas~A Courtade.
\newblock Denoising linear models with permuted data.
\newblock In \emph{2017 IEEE International Symposium on Information Theory
  (ISIT)}, pages 446--450. IEEE, 2017{\natexlab{a}}.

\bibitem[Zhang et~al.(2019{\natexlab{a}})Zhang, Slawski, and
  Li]{zhang2019benefits}
Hang Zhang, Martin Slawski, and Ping Li.
\newblock The benefits of diversity: Permutation recovery in unlabeled sensing
  from multiple measurement vectors.
\newblock \emph{arXiv preprint arXiv:1909.02496}, 2019{\natexlab{a}}.

\bibitem[Zhang et~al.(2019{\natexlab{b}})Zhang, Slawski, and
  Li]{zhang2019permutation}
Hang Zhang, Martin Slawski, and Ping Li.
\newblock Permutation recovery from multiple measurement vectors in unlabeled
  sensing.
\newblock In \emph{2019 IEEE International Symposium on Information Theory
  (ISIT)}, pages 1857--1861. IEEE, 2019{\natexlab{b}}.

\bibitem[Zhang and Li(2020)]{zhang2020optimal}
Hang Zhang and Ping Li.
\newblock Optimal estimator for unlabeled linear regression.
\newblock In \emph{International Conference on Machine Learning}, pages
  11153--11162. PMLR, 2020.

\bibitem[Slawski et~al.(2020{\natexlab{a}})Slawski, Rahmani, and
  Li]{slawski2020sparse}
Martin Slawski, Mostafa Rahmani, and Ping Li.
\newblock A sparse representation-based approach to linear regression with
  partially shuffled labels.
\newblock In \emph{Uncertainty in Artificial Intelligence}, pages 38--48. PMLR,
  2020{\natexlab{a}}.

\bibitem[Slawski et~al.(2020{\natexlab{b}})Slawski, Ben-David, and
  Li]{slawski2020two}
Martin Slawski, Emanuel Ben-David, and Ping Li.
\newblock Two-stage approach to multivariate linear regression with sparsely
  mismatched data.
\newblock \emph{J. Mach. Learn. Res.}, 21\penalty0 (204):\penalty0 1--42,
  2020{\natexlab{b}}.

\bibitem[Yao et~al.(2021)Yao, Peng, and Tsakiris]{yao2021unlabeled}
Yunzhen Yao, Liangzu Peng, and Manolis~C Tsakiris.
\newblock Unlabeled principal component analysis.
\newblock \emph{arXiv preprint arXiv:2101.09446}, 2021.

\bibitem[Dummit and Foote(1991)]{dummit1991abstract}
David~S Dummit and Richard~M Foote.
\newblock \emph{Abstract algebra}, volume 1999.
\newblock Prentice Hall Englewood Cliffs, NJ, 1991.

\bibitem[Cand{\`e}s and Tao(2010)]{candes2010power}
Emmanuel~J Cand{\`e}s and Terence Tao.
\newblock The power of convex relaxation: Near-optimal matrix completion.
\newblock \emph{IEEE Transactions on Information Theory}, 56\penalty0
  (5):\penalty0 2053--2080, 2010.

\bibitem[Abid et~al.(2017)Abid, Poon, and Zou]{abid2017linear}
Abubakar Abid, Ada Poon, and James Zou.
\newblock Linear regression with shuffled labels.
\newblock \emph{arXiv preprint arXiv:1705.01342}, 2017.

\bibitem[Jonker and Volgenant(1986)]{jonker1986improving}
Roy Jonker and Ton Volgenant.
\newblock Improving the hungarian assignment algorithm.
\newblock \emph{Operations Research Letters}, 5\penalty0 (4):\penalty0
  171--175, 1986.

\bibitem[Peyr{\'e} et~al.(2019)Peyr{\'e}, Cuturi,
  et~al.]{peyre2019computational}
Gabriel Peyr{\'e}, Marco Cuturi, et~al.
\newblock Computational optimal transport: With applications to data science.
\newblock \emph{Foundations and Trends{\textregistered} in Machine Learning},
  11\penalty0 (5-6):\penalty0 355--607, 2019.

\bibitem[Jin et~al.(2020{\natexlab{a}})Jin, Netrapalli, and
  Jordan]{jin2020local}
Chi Jin, Praneeth Netrapalli, and Michael Jordan.
\newblock What is local optimality in nonconvex-nonconcave minimax
  optimization?
\newblock In \emph{International Conference on Machine Learning}, pages
  4880--4889. PMLR, 2020{\natexlab{a}}.

\bibitem[Birkhoff(1946)]{birkhoff1946three}
Garrett Birkhoff.
\newblock Three observations on linear algebra.
\newblock \emph{Univ. Nac. Tacuman, Rev. Ser. A}, 5:\penalty0 147--151, 1946.

\bibitem[Xie et~al.(2021)Xie, Mao, Zuo, Xu, Ye, Zhao, and
  Zha]{xie_hypergradient_2020}
Yujia Xie, Yixiu Mao, Simiao Zuo, Hongteng Xu, Xiaojing Ye, Tuo Zhao, and
  Hongyuan Zha.
\newblock A hypergradient approach to robust regression without correspondence.
\newblock In \emph{International Conference on Learning Representations}, 2021.
\newblock URL \url{https://openreview.net/forum?id=l35SB-_raSQ}.

\bibitem[Bertsekas(1997)]{bertsekas1997nonlinear}
Dimitri~P Bertsekas.
\newblock Nonlinear programming.
\newblock \emph{Journal of the Operational Research Society}, 48\penalty0
  (3):\penalty0 334--334, 1997.

\bibitem[Xie et~al.(2020)Xie, Wang, Wang, and Zha]{pmlr-v115-xie20b}
Yujia Xie, Xiangfeng Wang, Ruijia Wang, and Hongyuan Zha.
\newblock A fast proximal point method for computing exact wasserstein
  distance.
\newblock In Ryan~P. Adams and Vibhav Gogate, editors, \emph{Proceedings of The
  35th Uncertainty in Artificial Intelligence Conference}, volume 115 of
  \emph{Proceedings of Machine Learning Research}, pages 433--453. PMLR, 22--25
  Jul 2020.
\newblock URL \url{https://proceedings.mlr.press/v115/xie20b.html}.

\bibitem[Harper and Konstan(2015)]{harper2015movielens}
F~Maxwell Harper and Joseph~A Konstan.
\newblock The movielens datasets: History and context.
\newblock \emph{Acm transactions on interactive intelligent systems (tiis)},
  5\penalty0 (4):\penalty0 1--19, 2015.

\bibitem[Georghiades et~al.(2001)Georghiades, Belhumeur, and
  Kriegman]{GeBeKr01}
A.S. Georghiades, P.N. Belhumeur, and D.J. Kriegman.
\newblock From few to many: Illumination cone models for face recognition under
  variable lighting and pose.
\newblock \emph{IEEE Trans. Pattern Anal. Mach. Intelligence}, 23\penalty0
  (6):\penalty0 643--660, 2001.

\bibitem[Huang and Madan(1999)]{huang1999cap3}
Xiaoqiu Huang and Anup Madan.
\newblock Cap3: A dna sequence assembly program.
\newblock \emph{Genome research}, 9\penalty0 (9):\penalty0 868--877, 1999.

\bibitem[Ganapathi et~al.(2012)Ganapathi, Plagemann, Koller, and
  Thrun]{ganapathi2012real}
Varun Ganapathi, Christian Plagemann, Daphne Koller, and Sebastian Thrun.
\newblock Real-time human pose tracking from range data.
\newblock In \emph{European conference on computer vision}, pages 738--751.
  Springer, 2012.

\bibitem[Ham et~al.(2017)Ham, Cho, Schmid, and Ponce]{ham_proposal_2017}
Bumsub Ham, Minsu Cho, Cordelia Schmid, and Jean Ponce.
\newblock Proposal flow: {Semantic} correspondence from object proposals.
\newblock \emph{IEEE transactions on pattern analysis and machine
  intelligence}, 40\penalty0 (7):\penalty0 1711--1725, 2017.
\newblock Publisher: IEEE.

\bibitem[Jia et~al.(2016)Jia, Chan, Zeng, Gao, Wang, Zhang, and
  Ma]{jia_roml_2016}
Kui Jia, Tsung-Han Chan, Zinan Zeng, Shenghua Gao, Gang Wang, Tianzhu Zhang,
  and Yi~Ma.
\newblock {ROML}: {A} robust feature correspondence approach for matching
  objects in a set of images.
\newblock \emph{International Journal of Computer Vision}, 117\penalty0
  (2):\penalty0 173--197, 2016.
\newblock Publisher: Springer.

\bibitem[Wang et~al.(2019)Wang, Jabri, and Efros]{wang_learning_2019}
Xiaolong Wang, Allan Jabri, and Alexei~A. Efros.
\newblock Learning correspondence from the cycle-consistency of time.
\newblock In \emph{Proceedings of the {IEEE}/{CVF} {Conference} on {Computer}
  {Vision} and {Pattern} {Recognition}}, pages 2566--2576, 2019.

\bibitem[Zhou et~al.(2016)Zhou, Krahenbuhl, Aubry, Huang, and
  Efros]{zhou_learning_2016}
Tinghui Zhou, Philipp Krahenbuhl, Mathieu Aubry, Qixing Huang, and Alexei~A.
  Efros.
\newblock Learning dense correspondence via 3d-guided cycle consistency.
\newblock In \emph{Proceedings of the {IEEE} {Conference} on {Computer}
  {Vision} and {Pattern} {Recognition}}, pages 117--126, 2016.

\bibitem[Nejatbakhsh and Varol(2019)]{nejatbakhsh_robust_2019}
Amin Nejatbakhsh and Erdem Varol.
\newblock Robust approximate linear regression without correspondence.
\newblock \emph{arXiv preprint arXiv:1906.00273}, 2019.

\bibitem[Barazandeh and Razaviyayn(2020)]{barazandeh_solving_2020}
Babak Barazandeh and Meisam Razaviyayn.
\newblock Solving {Non}-{Convex} {Non}-{Differentiable} {Min}-{Max} {Games}
  {Using} {Proximal} {Gradient} {Method}.
\newblock In \emph{{ICASSP} 2020-2020 {IEEE} {International} {Conference} on
  {Acoustics}, {Speech} and {Signal} {Processing} ({ICASSP})}, pages
  3162--3166. IEEE, 2020.

\bibitem[Zhang et~al.(2020)Zhang, Xiao, Sun, and Luo]{zhang_single-loop_2020}
Jiawei Zhang, Peijun Xiao, Ruoyu Sun, and Zhi-Quan Luo.
\newblock A {Single}-{Loop} {Smoothed} {Gradient} {Descent}-{Ascent}
  {Algorithm} for {Nonconvex}-{Concave} {Min}-{Max} {Problems}.
\newblock \emph{arXiv preprint arXiv:2010.15768}, 2020.

\bibitem[Lin et~al.(2020)Lin, Jin, and Jordan]{lin_gradient_2020}
Tianyi Lin, Chi Jin, and Michael Jordan.
\newblock On gradient descent ascent for nonconvex-concave minimax problems.
\newblock In \emph{International {Conference} on {Machine} {Learning}}, pages
  6083--6093. PMLR, 2020.

\bibitem[Jin et~al.(2020{\natexlab{b}})Jin, Netrapalli, and
  Jordan]{jin_what_2020}
Chi Jin, Praneeth Netrapalli, and Michael Jordan.
\newblock What is local optimality in nonconvex-nonconcave minimax
  optimization?
\newblock In \emph{International {Conference} on {Machine} {Learning}}, pages
  4880--4889. PMLR, 2020{\natexlab{b}}.

\bibitem[Thekumparampil et~al.(2019)Thekumparampil, Jain, Netrapalli, and
  Oh]{thekumparampil_efficient_2019}
Kiran~K. Thekumparampil, Prateek Jain, Praneeth Netrapalli, and Sewoong Oh.
\newblock Efficient algorithms for smooth minimax optimization.
\newblock In \emph{Advances in {Neural} {Information} {Processing} {Systems}},
  pages 12680--12691, 2019.

\bibitem[Tseng(2008)]{tseng_accelerated_2008}
Paul Tseng.
\newblock On accelerated proximal gradient methods for convex-concave
  optimization.
\newblock \emph{submitted to SIAM Journal on Optimization}, 2\penalty0 (3),
  2008.

\bibitem[Daskalakis and Panageas(2018)]{daskalakis_limit_2018}
Constantinos Daskalakis and Ioannis Panageas.
\newblock The limit points of (optimistic) gradient descent in min-max
  optimization.
\newblock In \emph{Advances in {Neural} {Information} {Processing} {Systems}},
  pages 9236--9246, 2018.

\bibitem[Rafique et~al.(2018)Rafique, Liu, Lin, and
  Yang]{rafique_non-convex_2018}
Hassan Rafique, Mingrui Liu, Qihang Lin, and Tianbao Yang.
\newblock Non-convex min-max optimization: {Provable} algorithms and
  applications in machine learning.
\newblock \emph{arXiv preprint arXiv:1810.02060}, 2018.

\bibitem[Nesterov(2007)]{nesterov_dual_2007}
Yurii Nesterov.
\newblock Dual extrapolation and its applications to solving variational
  inequalities and related problems.
\newblock \emph{Mathematical Programming}, 109\penalty0 (2-3):\penalty0
  319--344, 2007.
\newblock Publisher: Springer.

\bibitem[Juditsky and Nemirovski(2016)]{juditsky_solving_2016}
Anatoli Juditsky and Arkadi Nemirovski.
\newblock Solving variational inequalities with monotone operators on domains
  given by linear minimization oracles.
\newblock \emph{Mathematical Programming}, 156\penalty0 (1-2):\penalty0
  221--256, 2016.
\newblock Publisher: Springer.

\bibitem[Liu et~al.(2020)Liu, Lu, Chen, Feng, Xu, Al-Dujaili, Hong, and
  O’Reilly]{liu_min-max_2020}
Sijia Liu, Songtao Lu, Xiangyi Chen, Yao Feng, Kaidi Xu, Abdullah Al-Dujaili,
  Mingyi Hong, and Una-May O’Reilly.
\newblock Min-max optimization without gradients: {Convergence} and
  applications to black-box evasion and poisoning attacks.
\newblock In \emph{International {Conference} on {Machine} {Learning}}, pages
  6282--6293. PMLR, 2020.

\bibitem[Nemirovski(2004)]{nemirovski_prox-method_2004}
Arkadi Nemirovski.
\newblock Prox-method with rate of convergence {O} (1/t) for variational
  inequalities with {Lipschitz} continuous monotone operators and smooth
  convex-concave saddle point problems.
\newblock \emph{SIAM Journal on Optimization}, 15\penalty0 (1):\penalty0
  229--251, 2004.
\newblock Publisher: SIAM.

\bibitem[Nouiehed et~al.(2019)Nouiehed, Sanjabi, Huang, Lee, and
  Razaviyayn]{nouiehed_solving_2019}
Maher Nouiehed, Maziar Sanjabi, Tianjian Huang, Jason~D. Lee, and Meisam
  Razaviyayn.
\newblock Solving a class of non-convex min-max games using iterative first
  order methods.
\newblock In \emph{Advances in {Neural} {Information} {Processing} {Systems}},
  pages 14934--14942, 2019.

\bibitem[Razaviyayn et~al.(2020)Razaviyayn, Huang, Lu, Nouiehed, Sanjabi, and
  Hong]{razaviyayn_nonconvex_2020}
Meisam Razaviyayn, Tianjian Huang, Songtao Lu, Maher Nouiehed, Maziar Sanjabi,
  and Mingyi Hong.
\newblock Nonconvex min-max optimization: {Applications}, challenges, and
  recent theoretical advances.
\newblock \emph{IEEE Signal Processing Magazine}, 37\penalty0 (5):\penalty0
  55--66, 2020.
\newblock Publisher: IEEE.

\bibitem[Lu et~al.(2020)Lu, Tsaknakis, Hong, and Chen]{lu_hybrid_2020}
Songtao Lu, Ioannis Tsaknakis, Mingyi Hong, and Yongxin Chen.
\newblock Hybrid block successive approximation for one-sided non-convex
  min-max problems: algorithms and applications.
\newblock \emph{IEEE Transactions on Signal Processing}, 2020.
\newblock Publisher: IEEE.

\bibitem[Toh and Yun(2010)]{toh2010accelerated}
Kim-Chuan Toh and Sangwoon Yun.
\newblock An accelerated proximal gradient algorithm for nuclear norm
  regularized linear least squares problems.
\newblock \emph{Pacific Journal of optimization}, 6\penalty0
  (615-640):\penalty0 15, 2010.

\bibitem[Billsus et~al.(1998)Billsus, Pazzani, et~al.]{billsus1998learning}
Daniel Billsus, Michael~J Pazzani, et~al.
\newblock Learning collaborative information filters.
\newblock In \emph{Icml}, volume~98, pages 46--54, 1998.

\bibitem[Schaffalitzky and Zisserman(2002)]{schaffalitzky2002multi}
Frederik Schaffalitzky and Andrew Zisserman.
\newblock Multi-view matching for unordered image sets, or “how do i organize
  my holiday snaps?”.
\newblock In \emph{European conference on computer vision}, pages 414--431.
  Springer, 2002.

\bibitem[Bertsimas and Tsitsiklis(1997)]{bertsimas1997introduction}
Dimitris Bertsimas and John~N Tsitsiklis.
\newblock \emph{Introduction to linear optimization}, volume~6.
\newblock Athena Scientific Belmont, MA, 1997.

\bibitem[Mehrotra(1992)]{mehrotra1992implementation}
Sanjay Mehrotra.
\newblock On the implementation of a primal-dual interior point method.
\newblock \emph{SIAM Journal on optimization}, 2\penalty0 (4):\penalty0
  575--601, 1992.

\bibitem[Grant and Boyd(2014)]{grant2014cvx}
Michael Grant and Stephen Boyd.
\newblock Cvx: Matlab software for disciplined convex programming, version 2.1,
  2014.

\bibitem[Zheng(2015)]{zheng2015methodologies}
Yu~Zheng.
\newblock Methodologies for cross-domain data fusion: An overview.
\newblock \emph{IEEE transactions on big data}, 1\penalty0 (1):\penalty0
  16--34, 2015.

\bibitem[Yang et~al.(2019)Yang, Liu, Chen, and Tong]{yang2019federated}
Qiang Yang, Yang Liu, Tianjian Chen, and Yongxin Tong.
\newblock Federated machine learning: Concept and applications.
\newblock \emph{ACM Transactions on Intelligent Systems and Technology (TIST)},
  10\penalty0 (2):\penalty0 1--19, 2019.

\bibitem[Pananjady et~al.(2017{\natexlab{b}})Pananjady, Wainwright, and
  Courtade]{pananjady2017linear}
Ashwin Pananjady, Martin~J Wainwright, and Thomas~A Courtade.
\newblock Linear regression with shuffled data: Statistical and computational
  limits of permutation recovery.
\newblock \emph{IEEE Transactions on Information Theory}, 64\penalty0
  (5):\penalty0 3286--3300, 2017{\natexlab{b}}.

\bibitem[Bai and Hsing(2005)]{bai2005broken}
Zhidong Bai and Tailen Hsing.
\newblock The broken sample problem.
\newblock \emph{Probability theory and related fields}, 131\penalty0
  (4):\penalty0 528--552, 2005.

\bibitem[Hsu et~al.(2017)Hsu, Shi, and Sun]{hsu2017linear}
Daniel Hsu, Kevin Shi, and Xiaorui Sun.
\newblock Linear regression without correspondence.
\newblock \emph{arXiv preprint arXiv:1705.07048}, 2017.

\bibitem[DeGroot and Goel(1980)]{degroot1980estimation}
Morris~H DeGroot and Prem~K Goel.
\newblock Estimation of the correlation coefficient from a broken random
  sample.
\newblock \emph{The Annals of Statistics}, pages 264--278, 1980.

\bibitem[Kruskal(1977)]{kruskal1977three}
Joseph~B Kruskal.
\newblock Three-way arrays: rank and uniqueness of trilinear decompositions,
  with application to arithmetic complexity and statistics.
\newblock \emph{Linear algebra and its applications}, 18\penalty0 (2):\penalty0
  95--138, 1977.

\bibitem[Halmos(2013)]{halmos2013measure}
Paul~R Halmos.
\newblock \emph{Measure theory}, volume~18.
\newblock Springer, 2013.

\bibitem[Wainwright(2019)]{wainwright2019high}
Martin~J Wainwright.
\newblock \emph{High-dimensional statistics: A non-asymptotic viewpoint},
  volume~48.
\newblock Cambridge University Press, 2019.

\end{thebibliography}
\bibliographystyle{iclr2022_conference}

\newpage
\appendix
\section{Appendix}

\subsection{Proof for the Theoretical Results}
\label{app:proof}
\begin{proof}[Proof of Proposition \ref{prop:MR}]
  We denote that $a_1,..,a_{r_A}$ as the linear bases of the column space of $A$. We can extend them to the bases of the column space of $M$ as $a_1,..,a_{r_A},b_1,...,b_{r-r_A}$. In this way, there must exists a matrix $Q\in \Rbb^{r\times m_B}$ such that $$B={\bm[}a_1,..,a_{r_A},b_1,...,b_{r-r_A}{\bm ]}Q.$$ Hence, we have $$PB={\bm[}Pa_1,..,Pa_{r_A},Pb_1,...,Pb_{r-r_A}{\bm ]}Q.$$
    Similarly, there must exists a matrix $T\in \Rbb^{r_A\times m_A}$ such that $$A={\bm[}a_1,..,a_{r_A}{\bm ]}T.$$
    Hence, we obtain that $${\bm [}A,PB{\bm]}={\bm[}a_1,..,a_{r_A},Pa_1,..,Pa_{r_A},Pb_1,...,Pb_{r-r_A}{\bm ]}\begin{bmatrix}
        T &  0 \\
        0 & Q \\
        \end{bmatrix}.$$

        Now, we have
        \begin{align}
            \label{eq:MR2}
            &{\rm rank}({\bm [}A,PB{\bm]})\leq{\rm rank}({\bm[}a_1,..,a_{r_A},Pa_1,..,Pa_{r_A},Pb_1,...,Pb_{r-r_A}{\bm ]}) \notag \\
            & \leq {\rm rank}({\bm[}a_1,..,a_{r_A},Pa_1,..,Pa_{r_A}{\bm ]})+r-r_A\notag\\
            &= {\rm rank}({\bm [}a_1,..,a_{r_A},Pa_1,..,Pa_{r_A}{\bm]}\begin{bmatrix}
                I_{r_A} & -I_{r_A}  \\
                0 & I_{r_A}  \\
                \end{bmatrix}) +r-r_A\notag\\
            &\leq r_A+r-r_A+{\rm rank}({\bm [}Pa_1-a_1,..,Pa_{r_A}-a_{r_A} {\bm]}).
        \end{align}
    
Now we denote the cycles in $\pi_P$ with length greater than 1 as $C_1,...,C_{C(\pi_P)}$, and $\zeta_1,...,\zeta_{n-H(\pi_p)}$ as the indexes that are not in any one of $C_1,...,C_{C(\pi_P)}$. We construct a matrix $Y\in \Rbb^{(n+C(\pi_P)-H(\pi_p))\times n}$ as: 
\begin{align*}
  &Y(i,j)=1\text{ if }j=\zeta_i \text{ else } Y(i,j)=0,\ \text{for }i=1,...,(n-H(\pi_p));\\
  &Y(i,j)=1\ \forall j\in C_i, \text{ and } Y(i,j)=0\ \forall j\notin C_i,\\ &\ \ \ \ \ \ \ \ \text{for }i=(n-H(\pi_p)+1),...,(n+C(\pi_P)-H(\pi_p)).\ 
\end{align*} 

It can be verified that 
\begin{align*}
  Y(Pa_i-a_i)=0,\ i=1,...,r_A.
\end{align*}

We denote the null space of $Y$ as Null$(Y)=\{x\in\Rbb^n|Yx=0\}$. From the construction of Y we can see that dim$($Null$(Y))= H(\pi_P)-C(\pi_P)$.  Hence we have 
\begin{align}
  \label{eq:MR3}
  {\rm rank}({\bm [}Pa_1-a_1,..,Pa_{r_A}-a_{r_A} {\bm]})\leq  H(\pi_P)-C(\pi_P).
\end{align}

        On the other hand, we have 
        \begin{align}
            \label{eq:MR4}
            {\rm rank}({\bm [}A,PB {\bm]})\leq {\rm rank}(A)+{\rm rank}(PB)={\rm rank}(A)+{\rm rank}(B)=r_A+r_B.
        \end{align}
        Combining \eqref{eq:MR2}, \eqref{eq:MR3} and \eqref{eq:MR4} , we can obtain \eqref{eq:MR1}.
\end{proof}

Following the proof of Proposition \ref{prop:MR}, it is easy to show the similar result for the case with multiple permutation, which is summarized as the  Corollary \ref{col:multiple_rank}

\begin{corollary}
  \label{col:multiple_rank}
  For the matrix $M={\bm [}A,B_1,..,B_d{\bm ]}\in \Rbb^{n\times m} $ with ${\rm rank}(M)=r$, ${\rm rank}(A)=r_A$, and ${\rm rank}(B_i)=r_{B_i}$, $i=1,...d$, we have $\forall P_1,...,P_d\in\Pc_n$,
  \begin{align}
      {\rm rank}({\bm [}A,P_1B_1,...,P_dB_d{\bm]})\leq \min\{n,m,r_A+\sum_{i=1}^d r_{B_i},r+\sum_{i=1}^d H(\pi_{P_i})-C(\pi_{P_i})\}.
  \end{align} 
\end{corollary}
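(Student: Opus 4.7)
The plan is to generalize the proof of Proposition \ref{prop:MR} by extending a basis of $\text{col}(A)$ to a basis of $\text{col}(M)$ and applying the cycle-based null-space construction separately to each of the $d$ permutations. The trivial bounds $n$ and $m$ are immediate from the shape of the matrix. The bound $r_A + \sum_i r_{B_i}$ follows at once from $\text{rank}([X_1,\dots,X_k]) \leq \sum_k \text{rank}(X_k)$ combined with $\text{rank}(P_i B_i) = \text{rank}(B_i) = r_{B_i}$. The remaining nontrivial bound $r + \sum_i (H(\pi_{P_i}) - C(\pi_{P_i}))$ is where the work lies.

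For the nontrivial bound, I would first fix a basis $a_1,\dots,a_{r_A}$ of $\text{col}(A)$ and extend it to a basis $a_1,\dots,a_{r_A},b_1,\dots,b_{r-r_A}$ of $\text{col}(M)$. Since the columns of each $B_i$ lie in $\text{col}(M)$, there exist matrices $Q_i \in \Rbb^{r\times m_{B_i}}$ and $T \in \Rbb^{r_A\times m_A}$ with $B_i = [a_1,\dots,a_{r_A},b_1,\dots,b_{r-r_A}]\,Q_i$ and $A = [a_1,\dots,a_{r_A}]\,T$. Applying $P_i$ to the expression for $B_i$ and concatenating as in Proposition \ref{prop:MR}, one obtains
\[
[A, P_1 B_1,\dots,P_d B_d] = N \cdot \text{blockdiag}(T, Q_1,\dots,Q_d),
\]
where $N$ collects the columns $a_1,\dots,a_{r_A}$ together with $P_i a_1,\dots,P_i a_{r_A},P_i b_1,\dots,P_i b_{r-r_A}$ for each $i=1,\dots,d$. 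Hence the rank is at most $\text{rank}(N)$.

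Next, augment $N$ by adjoining the columns $b_1,\dots,b_{r-r_A}$; this only increases the rank. In the augmented matrix, perform the rank-preserving column operations that replace each $P_i a_j$ by $P_i a_j - a_j$ and each $P_i b_k$ by $P_i b_k - b_k$. The result decomposes into the block $[a_1,\dots,a_{r_A},b_1,\dots,b_{r-r_A}]$ of rank $r$ together with $d$ blocks of differences, the $i$-th consisting of $\{P_i v - v : v \in \{a_1,\dots,a_{r_A},b_1,\dots,b_{r-r_A}\}\}$. For each $i$, the cycle matrix $Y_i$ constructed as in Proposition \ref{prop:MR} annihilates every vector of the form $P_i v - v$ for arbitrary $v\in\Rbb^n$, because $Y_i$ depends only on the cycle structure of $\pi_{P_i}$; hence the $i$-th difference block has rank at most $\dim \text{Null}(Y_i) = H(\pi_{P_i}) - C(\pi_{P_i})$. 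Summing the contributions and invoking subadditivity of rank across the blocks yields the desired bound $r + \sum_i(H(\pi_{P_i}) - C(\pi_{P_i}))$.

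The main obstacle is the bookkeeping step of recognizing that, once the full basis of $\text{col}(M)$ is introduced, the annihilation identity $Y_i(P_i v - v) = 0$ from the proof of Proposition \ref{prop:MR} applies uniformly to every basis vector (both the $a_j$ and the new $b_k$), so the difference blocks cleanly separate across the $d$ permutations. With that observation, everything else reduces to the same matrix-factorization and null-space argument used in the single-permutation case.
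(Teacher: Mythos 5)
Your proof is correct and follows essentially the route the paper intends, namely reusing the basis-extension, factorization, and cycle-matrix null-space machinery from Proposition \ref{prop:MR} (the paper itself gives no separate proof of the corollary, only the pointer to that proposition). The one place where you go beyond a verbatim transcription is also the one place where care is actually needed: a naive copy of the single-permutation bookkeeping, which simply drops the $P b$ columns at a cost of $r-r_A$, would charge $d(r-r_A)$ in the multi-permutation case and yield the weaker bound $r_A + d(r-r_A) + \sum_i (H(\pi_{P_i})-C(\pi_{P_i}))$; your augment-and-subtract step, using that $Y_i(P_i v - v)=0$ for every $v$ so that the $P_i b_k - b_k$ columns are absorbed into the $H(\pi_{P_i})-C(\pi_{P_i})$ budget, is exactly the right fix and gives the stated bound.
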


\begin{proof}[Proof of Proposition \ref{prop:attained_rank}]
  To prove Proposition \ref{prop:attained_rank}, we need an important lemma on measure theory from \citep{halmos2013measure}.
\begin{lemma}
  \label{lm:measure}
  Let $p(x)$ be a polynomial on $\Rbb^n$. If there exists  a $x_0\in\Rbb^n$ such that $p(x_0)\neq 0$, then the Lebesgue measure of the set $\{x|p(x)=0\}$ is 0.
\end{lemma}

   $\forall P\in \Pc_n$, we define the polynomial on $\Rbb^{n\times r}\otimes \Rbb^{r\times m}$ as $$p^r_P(R,E)=\sum_{S\in \Sc_{r}([A,PB])}{\rm det}(S)^2,$$ where det($\cdot$) is the determinant of matrix, and $\Sc_{r}(X)$ is the set of all $r\times r$ sub-matrices in $X$. We denote that $r_P=\min\{2r,r+H(\pi_P)-C(\pi_P)$. We can see that ${\rm rank}({\bm [}A,PB{\bm]})\geq r_P$ if and if only $p^{r_P}_P(R,E)>0$. Therefore, from Lemma \ref{lm:measure} and Proposition \ref{prop:MR} we can conclude that if there exists two matrices $R_0\in \Rbb^{n\times r}$ and $E_0\in \Rbb^{r\times m}$ such that $p^{r_P}_P([R_0,E_0])>0$, then ${\rm rank}({\bm [}A,PB{\bm]})=r_P$ holds with probability 1. In this way, we only need to construct such $R_0$ and $E_0$ for every $P\in \Pc_n$. For simplicity, we denote that $k=H(\pi_p)-C(\pi_P)$. We will discuss how to construct such $R_0$ and $E_0$ for the two cases $0<k\leq n-r$ and $k\geq n-r$, respectively.

  (1) If $0<k\leq n-r$:

  We construct the matrix $Y\in \Rbb^{(n+C(\pi_P)-H(\pi_p))\times n}$ the same way with that in the proof of Proposition \ref{prop:MR}. 
  Firstly, we show that Null$(Y)=$col$(P-I)$.

  col$(P-I)\subseteq$Null$(Y)$: We can verify that $Y(P-I)=0$. 
  
  Null$(Y)\subseteq$col$(P-I)$: This is equivalent to prove that Null$(P-I)\subseteq$col$(Y)$. Now we have $Px=x$, $\forall x\in$Null$(P-I)$. It can be verified that if $Px=x$, then we must have
    $x(s)=x(q)$  if $s$  and $q$ belong to the same cycle  $C_i$, where $C_i$ is one of the cycles in $C_1,...,C_{C(\pi_P)}$. By the definition of $Y$, we can see that  $x\in \text{col}(Y)$.

    Now we know that rank$(P-I)=$dim$(\text{Null}(Y))=k$. We denote the eigen vectors of $P-I$ with non-zero eigen values as $\phi_1,...,\phi_k$, and the eigen vectors with zero eigen values as $\phi_{k+1},...,\phi_n$. Now we have $(P-I)\phi_i=\lambda_i \phi_i$ for $i=1,...,k$ and $(P-I)\phi_i=\lambda_i \phi_i$ for $i=k+1,...,n$.
  
  We construct the matrices $R_0$ and $E_0$ as  \begin{align*}
    &R_0=[\phi_1+\phi_{k+1},\phi_{\min\{2,k\}}+\phi_{k+2},...,\phi_{\min\{r,k\}}+\phi_{k+r}],\\
    &E_0=[I_r,\mathbf{0}_{r\times (m_A-r)},I_r,\mathbf{0}_{r\times (m_B-r)}].
  \end{align*}
Now we have \begin{align*}
  &A=[\phi_1+\phi_{k+1},\phi_{\min\{2,k\}}+\phi_{k+2},...,\phi_{\min\{r,k\}}+\phi_{k+r},\mathbf{0}_{n\times (m_A-r)}],\\
  &B=[\phi_1+\phi_{k+1},\phi_{\min\{2,k\}}+\phi_{k+2},...,\phi_{\min\{r,k\}}+\phi_{k+r},\mathbf{0}_{n\times (m_B-r)}],
\end{align*} since $[A,B]=R_0E_0$.
Therefore, we have
\begin{align*}
  &{\rm rank}({\bm [}A,PB{\bm]})={\rm rank}({\bm[}\phi_1+\phi_{k+1},...,\phi_{\min\{r,k\}}+\phi_{k+r},\lambda_1 \phi_1,...,\lambda_{\min\{r,k\}} \phi_{\min\{r,k\}}{\bm ]})\\
  &={\rm rank}({\bm[}\phi_{k+1},...,\phi_{k+r},\phi_1,..,\phi_{\min\{r,k\}}{\bm ]})\\
  &=r+\min\{k,r\}=\min\{2r,r+k\}.
\end{align*}

Now ${\rm rank}({\bm [}A,PB{\bm]})=r_P$ by this construction of $R_0$ and $E_0$. Hence $p^{r_P}_P([R_0,E_0])>0$.

(2) If $k> n-r$:

We denote that the length of a cycle $C$ as len$(C)$, and denote the cycle with maximum length among the $C_1,...,C_{C(\pi_P)}$ as $C^*$. Now we have \begin{align*}
  {\rm len}(C^*)\geq \frac{H(\pi_P)}{C(\pi_P)}\geq \frac{n}{n-k}>\frac{n}{r}\geq 2r.
\end{align*}
To simplify the notations, we assume that the cycle $C^*$ permute the first $j$ numbers, i.e., $$C^*=(123...(j-2)(j-1)j),$$ where $j> 2r$. We define the vector $u$ as $u=[1,2,3,...,j-2,j-1,j,0,...,0]^\top \in \Rbb^{n}$, and denote the corresponding permutation matrix to  $C^*$ as $P_{*}\in\Pc_n$. We construct the matrices $R_0$ and $E_0$ as  \begin{align*}
  &R_0=\left[
    \begin{matrix}
      u  & P_{*}^2u & \hdots & P^{2r-2}_{*}u\\
     \end{matrix}
     \right],\\
  &E_0=[I_r,\mathbf{0}_{r\times (m_A-r)},I_r,\mathbf{0}_{r\times (m_B-r)}].
\end{align*}
Now we have \begin{align*}
&A=[u  , P_{*}^2u , \hdots , P^{2r-2}_{*}u,\mathbf{0}_{n\times (m_A-r)}],\\
&B=[u  , P_{*}^2u , \hdots , P^{2r-2}_{*}u,\mathbf{0}_{n\times (m_B-r)}].
\end{align*}
Therefore, we have
\begin{align*}
  &{\rm rank}({\bm [}A,PB{\bm]})={\rm rank}({\bm[}u  , P_{*}u , \hdots , P^{2r-1}_{*}u{\bm ]})=2r,
\end{align*} because now ${\bm[}u  , P_{*}u , \hdots , P^{2r-1}_{*}u{\bm ]}$ is a circulant matrix. Now ${\rm rank}({\bm [}A,PB{\bm]})=r_P=2r$ by this construction of $R_0$ and $E_0$. Hence $p^{r_P}_P([R_0,E_0])>0$.
\end{proof}

\begin{proof}[Proof of Theorem \ref{thm:error_bound}.]
  To prove Theorem \ref{thm:error_bound}, we need to derive a series results. We first start with a very important inequality w.r.t nuclear norm.

  \begin{proposition}
    Let $P$ be a permutation matrix, then,
      \label{prop:bound_diff_nuclear}
      \begin{align}
        \label{p:bound_diff_nuclear}
        \|A\|_*+\|B\|_*\geq \|[A,PB]\|_* \geq \frac{\|A\|_*+\|B\|_*}{\|[U_AV_A^\top,PU_BV_B^\top]\|}\geq \frac{\|A\|_*+\|B\|_*}{\sqrt 2}.
      \end{align}
    \end{proposition}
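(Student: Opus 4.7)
The plan is to attack the three inequalities separately, with the key tool being nuclear-norm/operator-norm duality.

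For the upper bound $\|[A,PB]\|_*\leq \|A\|_*+\|B\|_*$, I would write $[A,PB]=[A,\mathbf{0}]+[\mathbf{0},PB]$, apply the triangle inequality of the nuclear norm, and note that the nuclear norm of a block matrix padded with zeros equals the nuclear norm of the nonzero block. Combined with $\|PB\|_*=\|B\|_*$ (since $P$ is orthogonal), this gives the bound immediately.

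For the crucial middle inequality, I would invoke the duality $\|X\|_* = \sup_{\|Y\|\leq 1}\langle X,Y\rangle$, where $\|\cdot\|$ denotes the spectral norm. Taking the specific test matrix $Y=[U_AV_A^\top, PU_BV_B^\top]$, a direct computation using $U_A^\top U_A=I$, $V_A^\top V_A=I$, and the analogous identities for $B$ together with $P^\top P=I$ gives
\begin{align*}
\langle [A,PB], [U_AV_A^\top, PU_BV_B^\top]\rangle &= \langle A, U_AV_A^\top\rangle + \langle PB, PU_BV_B^\top\rangle \\
&= \mathrm{tr}(\Sigma_A)+\mathrm{tr}(\Sigma_B)=\|A\|_*+\|B\|_*.
\end{align*}
Dividing by $\|Y\|$ produces the desired lower bound.

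The final inequality reduces to showing $\|[U_AV_A^\top, PU_BV_B^\top]\|\leq \sqrt{2}$. The trick here is the identity $\|[X_1,X_2]\|^2 = \|X_1X_1^\top+X_2X_2^\top\|\leq \|X_1\|^2+\|X_2\|^2$ (operator-norm subadditivity applied to $[X_1,X_2][X_1,X_2]^\top$). Since $U_AV_A^\top(U_AV_A^\top)^\top = U_AU_A^\top$ is an orthogonal projector, $\|U_AV_A^\top\|=1$; the same holds for $PU_BV_B^\top$ because $P$ is orthogonal. Summing gives $2$, and taking square roots gives the bound. The only potential subtlety I anticipate is keeping the reduced-vs-full SVD bookkeeping straight so that $U_A^\top U_A=I_{r_A}$ and $V_A^\top V_A=I_{r_A}$ are legitimately used in both the duality pairing and the projector argument, but this is a notational matter rather than a genuine obstacle.
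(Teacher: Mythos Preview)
Your proposal is correct and matches the paper's proof almost exactly: the upper bound via the triangle inequality on $[A,\mathbf{0}]+[\mathbf{0},PB]$, and the middle inequality via nuclear/spectral duality with the test matrix $[U_AV_A^\top,PU_BV_B^\top]$, are precisely what the paper does. The only cosmetic difference is in the $\sqrt{2}$ bound: the paper expands $\sup_{\|x\|=1}\|[U_AV_A^\top,PU_BV_B^\top]x\|$ directly over $x=(x_1,x_2)$ and uses the triangle inequality plus $\sup_{\|x_1\|^2+\|x_2\|^2\leq 1}(\|x_1\|+\|x_2\|)=\sqrt{2}$, whereas you use the cleaner identity $\|[X_1,X_2]\|^2=\|X_1X_1^\top+X_2X_2^\top\|$ together with operator-norm subadditivity---an equivalent one-line route.
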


    Based on \eqref{p:bound_diff_nuclear}, the general idea is that under the Assumptions \ref{asp:asp1}, \ref{asp:asp2} and \ref{asp:asp3}, we will have $\|M\|_*\approx \frac{\|A\|_*+\|B\|_*}{\sqrt 2}$ and $\|[U_AV_A^\top,PU_BV_B^\top]\|\to 1$ as $H(\pi_P)$ increases.

    Firstly, we show that under the Assumptions \ref{asp:asp1}, \ref{asp:asp2}, the nuclear norm of the original matrix $M$ will reach the lower bound in \eqref{p:bound_diff_nuclear} approximately, which is summarized as Lemma \ref{lm:core1}.
\begin{lemma}
  \label{lm:core1}
  Under the Assumptions \ref{asp:asp1}, \ref{asp:asp2}, we have 
  \begin{align}
    \|M\|_* \leq (\|A\|_*+\|B\|_*)/\sqrt 2 +(\sqrt 2 +1)\epsilon_1r+\epsilon_2\max\{\|A\|_*,\|B\|_*\}.
  \end{align}
\end{lemma}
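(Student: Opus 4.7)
The strategy is to exhibit an auxiliary matrix $\tilde M$ that approximately attains the lower bound $(\|A\|_*+\|B\|_*)/\sqrt{2}$ in Proposition~\ref{prop:bound_diff_nuclear}, and then to compare $M$ with $\tilde M$ via the triangle inequality $\|M\|_*\leq \|\tilde M\|_*+\|M-\tilde M\|_*$. The equality case of the lower bound corresponds to $A$ and $B$ sharing left singular vectors with matching singular values, so under Assumptions~\ref{asp:asp1} and~\ref{asp:asp2} the natural move is to force $\tilde M$ into this equality configuration and treat the deviation to $M$ as an additive error.

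\textbf{Construction and bound on $\|\tilde M\|_*$.} Set $T=\min\{r_A,r_B\}$ and, for each $i\leq T$, pick a common left singular direction $u^i\in\{u_A^i,u_B^i\}$ to be specified in the residual step. Define
$$
\tilde M \;=\; \sum_{i=1}^T u^i\bigl[\sigma_A^i v_A^{i\top},\ \sigma_B^i v_B^{i\top}\bigr],\qquad \tau_i=\sqrt{(\sigma_A^i)^2+(\sigma_B^i)^2},\qquad \tilde w_i=\tau_i^{-1}[\sigma_A^i v_A^i;\ \sigma_B^i v_B^i].
$$
Each $\tilde w_i\in\mathbb{R}^{m_A+m_B}$ is a unit vector by construction, and for $i\neq j$ the inner product $\tilde w_i^\top\tilde w_j$ reduces, via the separate orthonormality of $\{v_A^i\}$ and $\{v_B^i\}$, to zero; hence $\{\tilde w_i\}$ is orthonormal. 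Since $\{u^i\}$ is also orthonormal, the decomposition $\tilde M=\sum_{i=1}^T \tau_i u^i\tilde w_i^\top$ is a bona fide SVD, so $\|\tilde M\|_*=\sum_{i=1}^T \tau_i$. The elementary bound $\sqrt{a^2+b^2}\leq (a+b)/\sqrt{2}+|a-b|/\sqrt{2}$ applied termwise, together with Assumption~\ref{asp:asp1}, then gives $\|\tilde M\|_*\leq (\|A\|_*+\|B\|_*)/\sqrt{2}+r\epsilon_1/\sqrt{2}$.

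\textbf{Residual and conclusion.} Expanding $M-\tilde M$ produces two types of terms. First, when $r_A\neq r_B$ there is a tail $\sum_{i=T+1}^{\max\{r_A,r_B\}}\sigma_\bullet^i u_\bullet^i v_\bullet^{i\top}$ coming from the higher-rank side; its nuclear norm is at most $\epsilon_1 r$ because Assumption~\ref{asp:asp1} forces each such singular value to be at most $\epsilon_1$ (its paired singular value on the other side is zero). Second, for $i\leq T$ the construction leaves rank-one pieces of the form $\sigma_\bullet^i(u_B^i-u_A^i)v_\bullet^{i\top}$, whose total nuclear norm is at most $\epsilon_2\min\{\|A\|_*,\|B\|_*\}$ by Assumption~\ref{asp:asp2} and $\|v_\bullet^i\|=1$, provided we choose $u^i$ on the heavier side (i.e., $u^i=u_A^i$ when $\|A\|_*\geq\|B\|_*$ and vice versa). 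Using $\min\leq\max$, the triangle inequality, and $1/\sqrt{2}+1\leq \sqrt{2}+1$ yields the claim. The only subtle points are verifying the orthonormality of $\{\tilde w_i\}$---the key reason the construction works---and correctly bookkeeping the tail when $r_A\neq r_B$; neither is a serious obstacle.
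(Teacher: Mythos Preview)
Your proof is correct and follows essentially the same route as the paper's: construct an auxiliary matrix by forcing $A$ and $B$ to share left singular vectors, compute its nuclear norm exactly as $\sum_i\sqrt{(\sigma_A^i)^2+(\sigma_B^i)^2}$, and control the residual via Assumptions~\ref{asp:asp1} and~\ref{asp:asp2}. Your termwise bound $\sqrt{a^2+b^2}\le (a+b)/\sqrt{2}+|a-b|/\sqrt{2}$ and your choice of $u^i$ on the heavier-nuclear-norm side are cleaner than the paper's corresponding algebra and in fact yield the slightly sharper constants $(1+1/\sqrt{2})\epsilon_1 r$ and $\epsilon_2\min\{\|A\|_*,\|B\|_*\}$, which you then relax to match the stated lemma.
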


Then, we  show that under the Assumptions \ref{asp:asp2}, \ref{asp:asp3}, $\|[U_AV_A^\top,PU_BV_B^\top]\|\to 1$ as $H(\pi_P)$ increases, which is summarized as Lemma \ref{lm:core2}.

\begin{lemma}
  \label{lm:core2}
  Under the Assumptions \ref{asp:asp2}, \ref{asp:asp3}, we have 
  \begin{align}
    \|[U_AV_A^\top,PU_BV_B^\top]\|\leq \sqrt{2-H(\pi_P)\epsilon_3^2/2} + \sqrt{T}\epsilon_2.
  \end{align}
\end{lemma}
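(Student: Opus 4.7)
The plan is to decompose the target inequality into two contributions via the triangle inequality: one contribution that absorbs the distance between $U_A$ and $U_B$ through Assumption~\ref{asp:asp2} and produces the $\sqrt{T}\epsilon_2$ term, and a second contribution that exploits Assumption~\ref{asp:asp3} to produce the $\sqrt{2-H(\pi_P)\epsilon_3^2/2}$ term. For clarity, I would first treat the case $r_A=r_B=T$ so that the matrix products $U_AV_B^\top$ and $U_BV_A^\top$ make sense dimensionally; the general case then follows by padding the smaller of $U_A,U_B$ with zero columns.

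For the $\sqrt{T}\epsilon_2$ part, I would start from the standard dual representation
\[
\|[U_AV_A^\top,\,PU_BV_B^\top]\|^2 \;=\; \sup_{\|x\|=1}\bigl(\|U_A^\top x\|^2+\|U_B^\top P^\top x\|^2\bigr),
\]
which uses $(U_AV_A^\top)(U_AV_A^\top)^\top = U_AU_A^\top$ (since $V_A$ has orthonormal columns) and the analogous identity for the second block together with $P^\top P = I$. Writing $U_B^\top P^\top x = U_A^\top P^\top x + (U_B-U_A)^\top P^\top x$ and applying the ordinary triangle inequality together with Assumption~\ref{asp:asp2},
\[
\|U_B^\top P^\top x\| \;\le\; \|U_A^\top P^\top x\| + \|U_B-U_A\|_F \;\le\; \|U_A^\top P^\top x\| + \sqrt{T}\epsilon_2.
\]
Minkowski's inequality applied to the two-dimensional vector $(\|U_A^\top x\|,\,\|U_A^\top P^\top x\|+\sqrt{T}\epsilon_2)$ then gives, for every unit $x$,
\[
\sqrt{\|U_A^\top x\|^2+\|U_B^\top P^\top x\|^2} \;\le\; \sqrt{\|U_A^\top x\|^2+\|U_A^\top P^\top x\|^2} + \sqrt{T}\epsilon_2,
\]
so taking the supremum reduces matters to bounding $\sup_{\|x\|=1}\bigl(\|U_A^\top x\|^2+\|U_A^\top P^\top x\|^2\bigr)$ by $2-H(\pi_P)\epsilon_3^2/2$.

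For this remaining bound, the target is $\|U_AU_A^\top+PU_AU_A^\top P^\top\|\leq 2-H(\pi_P)\epsilon_3^2/2$. The key input is Assumption~\ref{asp:asp3}: every index $j$ moved by $\pi_P$ satisfies $|u_A^i(j)-u_A^i(\pi_P(j))|\ge\epsilon_3$ for all $i$, so
\[
\|u_A^i-Pu_A^i\|^2 \;\ge\; H(\pi_P)\,\epsilon_3^2, \quad i=1,\ldots,r_A.
\]
The plan is to feed this per-column gap into the parallelogram decomposition
\[
(u_A^{i\top}x)^2+((Pu_A^i)^\top x)^2 \;=\; \tfrac{1}{2}\bigl((u_A^i+Pu_A^i)^\top x\bigr)^2 + \tfrac{1}{2}\bigl((u_A^i-Pu_A^i)^\top x\bigr)^2,
\]
sum over $i$, and use the gap to show the ``difference'' half eats a definite amount off the trivial bound of $2$ uniformly in $x$. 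Concretely, one has to estimate the smallest principal angle between $\mathrm{range}(U_A)$ and $P\,\mathrm{range}(U_A)$ from below. Combining this with Step~1 via the Minkowski step above yields the claimed bound.

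The main obstacle is precisely this last conversion: turning the coordinatewise distinctness condition on individual $u_A^i$'s into a global operator-norm bound on the sum of two rank-$r_A$ projections. The worst-case $x$ in the supremum need not align with any particular direction $u_A^i$, and the cross correlations $u_A^{i\top}Pu_A^j$ for $i\neq j$ must be controlled simultaneously; one cannot simply add up the per-column bounds. Quantifying the smallest principal angle between $\mathrm{range}(U_A)$ and $P\,\mathrm{range}(U_A)$ in terms of the $\epsilon_3$-gap is where the real technical work lies, and is the step that genuinely uses the fact that \emph{every} moved index contributes simultaneously to the gap for \emph{every} column of $U_A$.
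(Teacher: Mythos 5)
Your setup is sound and, for the first half, essentially parallels the paper's argument in a cleaner form: the identity $\|[U_AV_A^\top,PU_BV_B^\top]\|^2=\sup_{\|x\|=1}\big(\|U_A^\top x\|^2+\|U_B^\top P^\top x\|^2\big)$ is correct, and your triangle-plus-Minkowski step that extracts $\sqrt{T}\epsilon_2$ from Assumption \ref{asp:asp2} is the same move as the paper's inequality \eqref{p:ax3}, where the first $T$ columns of $U_B$ are swapped for those of $U_A$ at a cost of $\|[u_B^1-u_A^1,\dots,u_B^T-u_A^T]\|\le\sqrt{T}\epsilon_2$. (A minor caveat: when $r_B>r_A$ the columns $u_B^{T+1},\dots,u_B^{r_B}$ survive the substitution and must be carried along; your zero-padding device does not quite cover that case, though the paper also only gestures at it.)

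The problem is that you stop exactly where the content of the lemma lives. All of the dependence on $\epsilon_3$ is concentrated in the remaining claim $\sup_{\|x\|=1}\big(\|U_A^\top x\|^2+\|U_A^\top P^\top x\|^2\big)\le 2-H(\pi_P)\epsilon_3^2/2$, equivalently a bound $\|U_A^\top PU_A\|\le 1-H(\pi_P)\epsilon_3^2/2$ on the largest principal-angle cosine between $\mathrm{range}(U_A)$ and $P\,\mathrm{range}(U_A)$, and you explicitly decline to prove it, observing only that the diagonal estimates $|\langle u_A^i,Pu_A^i\rangle|\le 1-H(\pi_P)\epsilon_3^2/2$ do not by themselves control the cross terms $\langle u_A^i,Pu_A^j\rangle$ for $i\neq j$. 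So what you have is a correct reduction, not a proof. For what it is worth, the paper's own proof disposes of this step by writing $\langle [u_A^1,\dots,u_A^T]x,P[u_A^1,\dots,u_A^T]y\rangle=\sum_i x(i)y(i)\langle u_A^i,Pu_A^i\rangle$, i.e., by silently dropping precisely the off-diagonal terms you flag, and then bounding each diagonal entry via the entrywise separation of Assumption \ref{asp:asp3}. In other words, the obstacle you identify is genuine: your outline does not resolve it, and the paper's treatment of it is itself only valid under an additional, unstated orthogonality of $u_A^i$ and $Pu_A^j$ for $i\neq j$. To complete your route you would need an honest bound on the full matrix $U_A^\top PU_A$ (for instance via $\sigma_{\min}\big(U_A^\top(I-P)U_A\big)$ or a Gershgorin-type control of its off-diagonal entries); that is the missing idea, and it is the step that actually uses Assumption \ref{asp:asp3}.
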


    Finally, we  need a classical result on the tail bound for the operator norm of Gaussian matrix, whose proof can be found in \citep{wainwright2019high}.

\begin{lemma}
  \label{lm:tail_bound}
Consider the random matrix $W\in\Rbb^{n\times m}$ whose elements follow $\Nc(0,\sigma^2)$ i.i.d. For any $\delta>0$, we have \begin{align}
  \label{p:tail_bound}
  \|W\|\leq \sqrt L(2+\delta)\sigma
\end{align} holds with probability greater than $1-2\exp\{\frac{-L\delta^2}{2}\}$, where $L=\max\{n,m\}$.
\end{lemma}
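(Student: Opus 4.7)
The plan is to prove this classical tail bound via two standard ingredients: Gaussian Lipschitz concentration and an expectation bound on the operator norm. First I would normalize by setting $G = W/\sigma$, so that $G$ has i.i.d.\ $\Nc(0,1)$ entries, and observe that it suffices to show $\|G\| \leq (2+\delta)\sqrt{L}$ with probability at least $1 - 2\exp(-L\delta^2/2)$; the inequality for $W$ then follows by multiplying through by $\sigma$.

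The first ingredient is to verify that the map $G \mapsto \|G\|$, viewed as a function of the $nm$ entries identified with a vector in $\Rbb^{nm}$, is $1$-Lipschitz with respect to the Euclidean norm. Indeed, for any matrices $G_1, G_2$, the reverse triangle inequality for the operator norm gives $\bigl|\|G_1\| - \|G_2\|\bigr| \leq \|G_1 - G_2\| \leq \|G_1 - G_2\|_F$, and the Frobenius norm coincides with the Euclidean norm on $\Rbb^{nm}$. The Tsirelson--Ibragimov--Sudakov Gaussian concentration inequality then yields
$$\mathbb{P}\bigl(\bigl|\|G\| - \mathbb{E}\|G\|\bigr| \geq t\bigr) \leq 2\exp(-t^2/2), \quad t \geq 0.$$

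The second ingredient is the expectation bound $\mathbb{E}\|G\| \leq \sqrt{n} + \sqrt{m} \leq 2\sqrt{L}$, the classical Gordon inequality. I would prove it via a Sudakov--Fernique Gaussian comparison between the two centered Gaussian processes $X_{x,y} = y^\top G x$ and $Y_{x,y} = \langle g, x\rangle + \langle h, y\rangle$ indexed by $(x,y) \in S^{m-1} \times S^{n-1}$, with $g \sim \Nc(0, I_m)$ and $h \sim \Nc(0, I_n)$ independent. A direct incremental-variance computation verifies the comparison hypothesis, and Sudakov--Fernique then yields $\mathbb{E}\sup_{x,y} X_{x,y} \leq \mathbb{E}\sup_{x,y} Y_{x,y} = \mathbb{E}\|g\| + \mathbb{E}\|h\| \leq \sqrt{m} + \sqrt{n}$.

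Combining the two ingredients with $t = \delta\sqrt{L}$,
$$\mathbb{P}\bigl(\|G\| \geq (2+\delta)\sqrt{L}\bigr) \leq \mathbb{P}\bigl(\|G\| \geq \mathbb{E}\|G\| + \delta\sqrt{L}\bigr) \leq 2\exp(-L\delta^2/2),$$
which rescales to the stated bound for $W$. The main obstacle is the Gordon-type expectation bound, since the Lipschitz concentration step is essentially a black-box invocation of a standard inequality. An alternative elementary route that avoids Gordon entirely is an $\epsilon$-net argument: take $1/4$-nets $N_m \subset S^{m-1}$ and $N_n \subset S^{n-1}$ with cardinalities at most $9^m$ and $9^n$, use that $\|G\| \leq 2 \max_{x \in N_m,\, y \in N_n} y^\top G x$, apply the scalar Gaussian tail $\mathbb{P}(|y^\top G x| > s) \leq 2\exp(-s^2/2)$ to each of the at most $9^{n+m}$ inner products, and union-bound; tuning $s$ to absorb the covering entropy $(n+m)\log 9$ yields an inequality of the same form $\|G\| \leq (C+\delta)\sqrt{L}$ with exponentially small failure probability.
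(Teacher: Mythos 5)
Your proof is correct and is exactly the standard argument (Lipschitz--Gaussian concentration of $\|\cdot\|$ around its mean plus Gordon's bound $\mathbb{E}\|G\|\leq\sqrt{n}+\sqrt{m}\leq 2\sqrt{L}$) found in the reference the paper cites; the paper itself offers no proof of this lemma, only the citation to Wainwright (2019). The only caveat, which you already flag, is that your alternative $\epsilon$-net route would not recover the specific constant $2$ in the statement, so the concentration-plus-Gordon route is the one that actually proves the lemma as stated.
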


  Based on Lemma \ref{lm:tail_bound}, we have 
  \begin{align*}
    \|W\|_*\leq L\|W\|\leq \sqrt{ML\sigma}
  \end{align*} holds with probability greater than $1-2\exp\{-\frac{M}{8L\sigma}\}$.

  From Proposition \ref{prop:bound_diff_nuclear}, Lemma \ref{lm:core1} and Lemma \ref{lm:core2} we can know that, for any $P\in\Pc_n$ with $H(\pi_P)$ satisfies that
  \begin{align*}
    & \frac{M}{\sqrt{2-\frac{H(\pi_p)\epsilon_3}{2}}+\sqrt{T}\epsilon_2}-\|W\|_*
    > \frac{M}{\sqrt{2}}+(\sqrt{2}+1)\epsilon_1r+\epsilon_2N+\|W\|_*,\\
  \end{align*} we must have 
\begin{align*}
  \|A_o,PB_o\|&\geq \|A,PB\| - \|W\|_* \\
    &\geq \frac{M}{\sqrt{2-\frac{H(\pi_p)\epsilon_3^2}{2}+\sqrt{T}\epsilon_2}}-\|W\|_*\\
    & > \frac{M}{\sqrt{2}}+(\sqrt{2}+1)\epsilon_1r+\epsilon_2N+\|W\|_*\\
    &\geq \|A,B\|_*+\|W\|_*\geq \|A_o,B_o\|_*.
\end{align*}

Therefore, with probability greater than $1-2\exp\{-\frac{M}{8L\sigma}\}$, if $H(\pi_P)$ satisfies that \begin{align*}
  \frac{M}{\sqrt{2-\frac{H(\pi_p)\epsilon_3^2}{2}}+\sqrt{T}\epsilon_2}
    > \frac{M}{\sqrt{2}}+(\sqrt{2}+1)\epsilon_1r+\epsilon_2N+2\sqrt{ML\sigma},\tag{@}\label{p:ax6}
\end{align*} we have $\|A_o,PB_o\|>\|A_o,B_o\|_*$. Now we simplify \eqref{p:ax6} as
\begin{align*}
  &\frac{M}{\sqrt{2-\frac{H(\pi_p)\epsilon_3^2}{2}}+\sqrt{T}\epsilon_2}
    > \frac{M}{\sqrt{2}}+(\sqrt{2}+1)\epsilon_1r+\epsilon_2N+2\sqrt{ML\sigma}\\
    \Leftrightarrow\ & \sqrt{2-\frac{H(\pi_p)\epsilon_3^2}{2}} < \frac{\sqrt{2}M}{M+(\sqrt{2}+2)\epsilon_1r+\sqrt{2}\epsilon_2N+2\sqrt{2ML\sigma}} - \sqrt{T}\epsilon_2.
\end{align*}
It can be verified that $$\frac{\sqrt{2}M}{M+(\sqrt{2}+2)\epsilon_1r+\sqrt{2}\epsilon_2N+2\sqrt{2ML\sigma}} - \sqrt{T}\epsilon_2>0$$ from the condition on $\epsilon_1$, $\epsilon_2$ and $\sigma$.

Therefore, we have 
\begin{align*}
 & \sqrt{2-\frac{H(\pi_p)\epsilon_3^2}{2}} < \frac{\sqrt{2}M}{M+(\sqrt{2}+2)\epsilon_1r+\sqrt{2}\epsilon_2N+2\sqrt{2ML\sigma}} - \sqrt{T}\epsilon_2\\
    \Leftrightarrow\ & H(\pi_P)> \frac{2}{\epsilon_3^2}\bigg(2-(\frac{\sqrt{2}M}{M+(\sqrt{2}+2)\epsilon_1r+\sqrt{2}\epsilon_2N+2\sqrt{2ML\sigma}} - \sqrt{T}\epsilon_2)^2 \bigg).
\end{align*}

Since $P^*$ is the optimal solution to \eqref{p:nuclear_norm}, we must have  \begin{align*}
  \|[A_o,P^*\tilde P B_o]\|_*\leq \|[A_o,B_o]\|_*.
\end{align*} Besides,  $P^*\tilde P$ is also a permutation matrix, we denote its corresponding permutation as $\hat \pi$. Now we have 
\begin{align*}
  d_H(\pi_*,\tilde \pi)=H(\hat \pi)\leq \frac{2}{\epsilon_3^2}\bigg(2-(\frac{\sqrt{2}M}{M+(\sqrt{2}+2)\epsilon_1r+\sqrt{2}\epsilon_2N+2\sqrt{2ML\sigma}} - \sqrt{T}\epsilon_2)^2 \bigg).
\end{align*}

\end{proof}

The proof to the auxiliary results used in the proof of Theorem \ref{thm:error_bound} are provided below.

\begin{proof}[Proof of Proposition \ref{prop:bound_diff_nuclear}]
  Since $\|\cdot\|_*$ is a norm, we have \begin{align*}
    \|[A,PB]\|_*=\|[A,\mathbf{0}]+[\mathbf{0},PB]\|_*\leq \|A\|_*+\|PB\|_*=\|A\|_*+\|B\|_*.
  \end{align*}
  Then since $\|\cdot\|_*$ is the dual norm of $\|\cdot\|$, we have 
  \begin{align*}
    \|[A,PB]\|_*&=\sup_{\|Q\|\leq 1} \langle [A,PB],Q \rangle\\
    &\geq \langle [A,PB],\frac{[U_AV_A^\top,PU_BV_B^\top]}{\|[U_AV_A^\top,PU_BV_B^\top]\|} \rangle\\
    &=\frac{\|A\|_*+\|B\|_*}{\|[U_AV_A^\top,PU_BV_B^\top]\|}.
  \end{align*}
  Finally, we have 
  \begin{align*}
    \|[U_AV_A^\top,PU_BV_B^\top]\|&=\sup_{\substack{x\in\Rbb^m\\\|x\|\leq 1}} \|[U_AV_A^\top,PU_BV_B^\top]x\|\\
    &=\sup_{\substack{x_1\in\Rbb^{m_A},x_2\in\Rbb^{m_B}\\\|[x_1^\top,x_2^\top]\|\leq 1}} \|[U_AV_A^\top x_1,PU_BV_B^\top x_2]\|\\
    &\leq \sup_{\substack{x_1\in\Rbb^{m_A},x_2\in\Rbb^{m_B}\\\|[x_1^\top,x_2^\top]\|\leq 1}} \|U_AV_A^\top x_1\|+\|PU_BV_B^\top x_2\|\\
    &\leq \sup_{\substack{x_1\in\Rbb^{m_A},x_2\in\Rbb^{m_B}\\\|[x_1^\top,x_2^\top]\|\leq 1}} \|x_1\|+\|x_2\|=\sqrt{2}.
  \end{align*}
\end{proof}

\begin{proof}[Proof of Lemma \ref{lm:core1}]
  If $r_A\geq r_B$, we have \begin{align*}
    \|M\|_*&=\|[U_A\Sigma_A V_A^\top,U_B\Sigma_B V_B^\top]\|_*\\
    &=\|[U_A\Sigma_A V_A^\top,[u_A^1,...,u_A^T,\mathbf{0},...,\mathbf{0}]\Sigma_B V_B^\top]+\\ &\ \ \ \ \ \ \ \ \ \ [\mathbf{0},[u_A^1-u_B^1,...,u_A^T-u_B^T,u_B^{T+1},...,u_B^r]\Sigma_B V_B^\top]\|_*\\
    &\leq \|[U_A\Sigma_A V_A^\top,[u_A^1,...,u_A^T,\mathbf{0},...,\mathbf{0}]\Sigma_B V_B^\top]\|_*+\\ &\ \ \ \ \ \ \ \ \ \ \|[u_A^1-u_B^1,...,u_A^T-u_B^T,u_B^{T+1},...,u_B^r]\Sigma_B V_B^\top\|_*\\
    &\leq \|[U_A\Sigma_A V_A^\top,[u_A^1,...,u_A^T,\mathbf{0},...,\mathbf{0}]\Sigma_B V_B^\top]\|_*+\epsilon_2 \|B\|_*\\
    &= \|[U_A\Sigma_A V_A^\top,U_A\Sigma_B V_B^\top]\|_*+\epsilon_2 \|B\|_*.\tag{*} \label{p:ax1}
  \end{align*}
  We denote that $trace(\cdot)$ as the trace of matrix. One property of nuclear norm is $$\|A\|_*=trace(\sqrt{AA^\top}).$$
  Then we have  \begin{align*}
    \|[U_A\Sigma_A V_A^\top,U_A\Sigma_B V_B^\top]\|_*&=trace(\sqrt{U_A(\Sigma_A^2+\Sigma_B^2)U_A^\top})\\
&=\sum_{i=1}^r \sqrt{(\sigma_A^{i})^2+(\sigma_B^{i})^2}\\
&\leq \sum_{i=1}^r \frac{\sigma_A^{i}+\sigma_B^{i}}{\sqrt 2}+(\sqrt{(\sigma_A^{i})^2+(\sigma_B^{i})^2}-\frac{\sigma_A^{i}+\sigma_B^{i}}{\sqrt 2})\\
&\leq \sum_{i=1}^r \frac{\sigma_A^{i}+\sigma_B^{i}}{\sqrt 2}+(\sqrt{(\sigma_A^{i})^2+(\sigma_A^{i}+\epsilon_1)^2}-\frac{2\sigma_A^{i}-\epsilon_1}{\sqrt 2})\\
&\leq \frac{\sqrt 2 \epsilon_1 r}{2} + \frac{\|A\|_*+\|B\|_*}{\sqrt 2}+ \\ & \ \ \ \  \ \ \ \ \ \ \ \ \ \ \ \ \ \ \ \ \  \sum_{i=1}^r \frac{2\sigma_A^{i}\epsilon_1+\epsilon_1^2}{\sqrt{2(\sigma_A^{i})^2+2\sigma_A^{i}\epsilon_1+\epsilon_1^2}+\sqrt{2(\sigma_A^{i})^2}}\\
& \leq \frac{\sqrt 2 \epsilon_1 r}{2} + \frac{\|A\|_*+\|B\|_*}{\sqrt 2}+   \sum_{i=1}^r \frac{\sqrt 2 \epsilon_1}{2}+\epsilon_1\\
&=\frac{\|A\|_*+\|B\|_*}{\sqrt 2}+ (\sqrt{2}+1)\epsilon_1 r.\tag{**}
\label{p:ax2}
  \end{align*}

  Combining \eqref{p:ax1} and \eqref{p:ax2}, we have 
  \begin{align*}
    \|[A,B]\|_*\leq \frac{\|A\|_*+\|B\|_*}{\sqrt 2}+ (\sqrt{2}+1)\epsilon_1 r + \epsilon_2 \|B\|_*.
  \end{align*}
Similarly, if $r_B\geq r_A$, we have \begin{align*}
  \|[A,B]\|_*\leq \frac{\|A\|_*+\|B\|_*}{\sqrt 2}+ (\sqrt{2}+1)\epsilon_1 r + \epsilon_2 \|A\|_*.
\end{align*}
Combining them together, we have 
\begin{align*}
  \|[A,B]\|_*\leq \frac{\|A\|_*+\|B\|_*}{\sqrt 2}+ (\sqrt{2}+1)\epsilon_1 r + \epsilon_2 \max \{\|A\|_*,\|B\|_*\}.
\end{align*}
\end{proof}

\begin{proof}[Proof pf Lemma \ref{lm:core2}]
  Firstly, if $r_A\geq r_B$ we have 
  \begin{align*}
    \|[U_AV_A^\top,PU_BV_B^\top]\|&=\|[U_AV_A^\top,P[u_A^1,...,u_A^T,\mathbf{0},...,\mathbf{0}]V_B^\top]\|+\\ &\ \ \ \ \ \ \ \ \ \ \ \ \ \  \|[0,P[u_B^1-u_A^1,...,u_B^T-u_A^T,\mathbf{0},...,\mathbf{0}]V_B^\top]\|\\
    &\leq \|[U_AV_A^\top,P[u_A^1,...,u_A^T,\mathbf{0},...,\mathbf{0}]V_B^\top]\| + \sqrt{T}\epsilon_2. \tag{***} \label{p:ax3}
  \end{align*}
  
To simplify the notations, we denote that $k=H(\pi_P)$ and assume that $\pi_P$ permutes the indexes $(1,...,k)$ into $(\zeta_1,...,\zeta_k)$. Now we have
\begin{align*}
  \langle u_A^{i}, Pu_A^i \rangle &= \sum_{i=1}^k u_A^{i}(i)u_A^{i}(\zeta_i) + \sum_{i=k+1}^n (u_A^{i}(i))^2,
\end{align*} and 
\begin{align*}
  |\sum_{i=1}^k u_A^{i}(i)u_A^{i}(\zeta_i)|&\leq \sum_{i=1}^k | u_A^{i}(i)u_A^{i}(\zeta_i)|\\
  &= \sum_{i=1}^k \frac{(u_A^{i}(i))^2+(u_A^{i}(\zeta_i))^2}{2} -(\frac{(u_A^{i}(i))^2+(u_A^{i}(\zeta_i))^2}{2} - |u_A^{i}(i)u_A^{i}(\zeta_i)|)\\
  &\leq \sum_{i=1}^k (u_A^{i}(i))^2 -( \frac{(u_A^{i}(i))^2+(|u_A^{i}(i)|-\epsilon_3)^2}{2} - |u_A^{i}(i)|(|u_A^{i}(i)|+\epsilon_3))\\
  &=\sum_{i=1}^k (u_A^{i}(i))^2 - (\frac{\epsilon_3^2}{2}+2|u_A^{i}(i)|\epsilon_3)\leq \sum_{i=1}^k (u_A^{i}(i))^2- \frac{\epsilon_3^2}{2}.
\end{align*}
Hence we must have $$|\langle u_A^{i}, Pu_A^i \rangle |\leq 1-\frac{k\epsilon_3^2}{2}.$$

Therefore, we have 
\begin{align*}
    \delta (U_A,P) &\stackrel{\text{def.}}= \max_{\substack{x,y\in \Rbb^T,\\\|x\|=1,\|y\|=1}} \langle [u_A^1,...,u_A^T]x, [Pu_A^1,...,Pu_A^T]y \rangle\\
    &=\max_{\substack{x,y\in \Rbb^T,\\\|x\|=1,\|y\|=1}} \sum_{i=1}^T x(i)y(i) \langle u_A^{i}, Pu_A^i \rangle\\
    &\leq \max_{\substack{x,y\in \Rbb^T,\\\|x\|=1,\|y\|=1}} (1-\frac{k\epsilon_3^2}{2})\sum_{i=1}^T x(i)y(i) \\
    &=1-\frac{k\epsilon_3^2}{2}.
\end{align*}

Now we have, 
\begin{align*}
  &\|[U_AV_A^\top,P[u_A^1,...,u_A^T,\mathbf{0},...,\mathbf{0}]V_B^\top]\|=\sup_{\substack{x\in\Rbb^n,\\\|x\|=1}} \|[U_AV_A^\top,P[u_A^1,...,u_A^T,\mathbf{0},...,\mathbf{0}]V_B^\top]x\|\\
&\leq \sup_{\substack{x_1\in\Rbb^{m_A},x_2\in\Rbb^{m_B}\\\|[x_1^\top,x_2^\top]\|\leq 1}} \sqrt{1+\langle U_AV_A^\top x_1,P[u_A^1,...,u_A^T,\mathbf{0},...,\mathbf{0}]V_B^\top x_2\rangle}\\
&\leq \sup_{\substack{x_1\in\Rbb^{m_A},x_2\in\Rbb^{m_B}\\\|[x_1^\top,x_2^\top]\|\leq 1}} \sqrt{1+\delta(U_A,P)\|x_1\|\|x_2\|}\leq \sqrt{2-\frac{k\epsilon_3^2}{2}}.\tag{****}\label{p:ax4}.
\end{align*}

Combining \eqref{p:ax3} and \eqref{p:ax4}, we have 
\begin{align*}
  \|[U_AV_A^\top,PU_BV_B^\top]\|\leq \sqrt{2-\frac{k\epsilon_3^2}{2}} + \sqrt{T}\epsilon_2.
\end{align*}

The proof is similar for the case $r_B\geq r_A$.
\end{proof}

  \subsection{Dual Problem of (\ref{p:entropyOT})}
  \label{app:dual}
  To simplify the notation, we denote the primal problem as
  \begin{align*}
    \underset{P\in \Pi(\mathbf{1}_n,\mathbf{1}_n)}{\text{minimize }} &\langle C,P\rangle+\epsilon H(P).
  \end{align*}
  We define two dual variables $\alpha,\beta \in \mathbb{R}^{n}$. The Lagrangian function is  
  \begin{align}
    \label{p:lagrange}
    L(P,\alpha,\beta)&=
    \langle C, P\rangle+\epsilon\langle \log{P}-\mathbf{1}_{n\times n},P\rangle+\left\langle\mathbf{1}_{n}-P \mathbf{1}_{n}, \alpha\right\rangle+\left\langle\mathbf{1}_{n}-P^T \mathbf{1}_{n}, \beta\right\rangle.
  \end{align}
  Now we minimize the Lagrangian function w.r.t $P$ (We note that $H(P)$ implicitly imposes that $P\in \mathbb{R}^{n\times n}_+$). From the first-order necessary condition of unconstrainted optimization, we have
  \begin{align}
    C-&\alpha\oplus\beta+\epsilon \log(P)=0,\notag \\ 
    &\Downarrow \notag\\ 
    P=&\text{exp}\bigg\{\frac{\alpha\oplus\beta-C}{\epsilon}\bigg\}.
    \label{p:reconstruct}
  \end{align}
  Substituting it into the Lagrangian function \eqref{p:lagrange} we have the dual objective
  \begin{align*}
    q(\alpha,\beta)=\underset{P}{\min}\ L(P,\alpha,\beta)=\left\langle\mathbf{1}_{n}, \alpha\right\rangle+\left\langle\mathbf{1}_{n}, \beta\right\rangle-\epsilon \bigg\langle \mathbf{1}_{n\times n} , \text{exp}\bigg\{\frac{\alpha\oplus\beta-C}{\epsilon}\bigg\}\bigg\rangle.
  \end{align*}
  Therefore the dual problem is 
  \begin{align}
    \label{p:DualOT}
  \max _{\alpha,\beta \in \mathbb{R}^{n}} \left\langle\mathbf{1}_{n}, \alpha\right\rangle+\left\langle\mathbf{1}_{n}, \beta\right\rangle-\epsilon\bigg\langle  \mathbf{1}_{n\times n} , \text{exp}\bigg\{\frac{\alpha\oplus\beta-C}{\epsilon}\bigg\}\bigg\rangle.
  \end{align}
  We can recover the primal solution $P$ from the dual solution $\alpha$, $\beta$ via \eqref{p:reconstruct}.

  \subsection{A Stable Implementation for Skinhorn Algorithm}
  The Skinhorn algorithm \citep{peyre2019computational} are often used to solve the dual problem \eqref{p:DualOT}, and the standard form of it reads 
  \begin{align*}
    p^{(t+1)}\leftarrow \frac{\mathbf{1}_{n}}{Kq^{(t)}}\text{  and  }q^{(t+1)}\leftarrow \frac{\mathbf{1}_{n}}{K^\top p^{(t+1)}},
  \end{align*}
  where $K=\text{exp}\bigg\{\frac{\alpha\oplus\beta-C}{\epsilon}\bigg\}$, and $p=\exp(\frac{\alpha}{\epsilon})$, $q=\exp(\frac{\beta}{\epsilon})$. If we adopt a small $\epsilon$, the elements of $K$ can overflow to infinity or zero, which causes a numerical issue. We can remedy this by using a different implementation from \citep{peyre2019computational}. 
  \begin{align*}
    \alpha^{(t+1)}&\leftarrow \text{Min}_\epsilon^{\text{row}}(C-\alpha^{(t)}\oplus\beta^{(t)})+\alpha^{(t)},\\
    \beta^{(t+1)}&\leftarrow \text{Min}_\epsilon^{\text{col}}(C-\alpha^{(t+1)}\oplus\beta^{(t)})+\beta^{(t)},
  \end{align*}
  where for any $A\in\mathbb{R}^{n\times m}$, we define the operator $\text{Min}_\epsilon^{\text{row}}$ and $\text{Min}_\epsilon^{\text{col}}$ as
  \begin{align*}
      \operatorname{Min}_{\varepsilon}^{\text {row }}(\mathbf{A}) \stackrel{\text { def. }}{=}\left(\text{min}_{\varepsilon}\mathbf{A}({i, \cdot})\right)_{i} \in \mathbb{R}^{n}, \\
      \operatorname{Min}_{\varepsilon}^{\text {col }}(\mathbf{A}) \stackrel{\text { def. }}{=}\left(\text{min} _{\varepsilon}\mathbf{A}({\cdot, j})\right)_{j} \in \mathbb{R}^{m},
  \end{align*}
  and for any vector $z=[z_1,...,z_n]^\top\in \mathbb{R}^n$, we denote $$\text{min}_\epsilon z\stackrel{\text{def.}}=\min_i z_i-\epsilon \log\sum_j e^{-(z_j-\min_i z_i)/\epsilon}$$ as the $\epsilon$-soft minimum for the elements of $z$.

  \subsection{Relationship between $\text{M}^3\text{O}$ and the {Soft-Impute} Algorithm}
  \label{app:link}
  {Soft-Impute} algorithm \citep{mazumder_spectral_2010} is a classical algorithm for matrix completion. Specifically, it tries to solve the nuclear norm regularized problem
  \begin{align}
    \underset{\widehat M}{\text{minimize }} \frac 12\left\|\Pc_\Omega(X)-\Pc_\Omega(\widehat M) \right\|^2_F+\lambda\left\|\widehat M\right\|_*. 
  \end{align} {Soft-Impute} is a simple iterative algorithm with the following two steps:
  \begin{align}
    \widehat X&\leftarrow \Pc_\Omega(X)+\Pc^\perp_\Omega(\widehat M),\\
    \widehat M&\leftarrow \text{prox}_{\lambda\left\|\cdot\right\|_*}(\widehat X)=U\Sc_\lambda(D)V^\top,
  \end{align} where $\widehat X=UDV^\top$ denotes the singular value decomposition of $\widehat X$, and $\Pc^\perp_\Omega$ is the operator that selects  entries whose indexes are not belonging to $\Omega$. Here $\Sc_\lambda$ is the soft-thresholding operator that operates element-wise on the diagonal matrix $D$, i.e., replacing $D_{ii}$ with $(D_{ii}-\lambda)_+$.
  
  Consider the partial observation extension. For the $\text{M}^3\text{O}$ algorithm, if an exact permutation matrix is obtained, i.e., $\widehat P=\text{exp}\bigg\{\frac{\alpha^*\oplus\beta^*-C(\widehat M_{B})}{\epsilon}\bigg\}\in \Pc_n$, it is easy to verify that the the gradient in Algorithm \ref{alg:McubicO} has the following form,
  \begin{align*}
    \nabla_{\widehat M}F_\epsilon(\widehat M,\alpha^*,\beta^*)=2(\Pc_\Omega(\widehat M)-\Pc_\Omega([A,\widehat P\tilde B])).
  \end{align*}
  In this way, if we adopts $\rho_k=0.5$, the proximal gradient update becomes 
  \begin{align*}
    \widehat M^{\text{k+1}} \leftarrow \text{prox}_{\lambda\left\|\cdot\right\|_*} (\Pc_\Omega([A,\widehat P\tilde B])+\Pc^\perp_\Omega(\widehat M^k)).
  \end{align*}
In practice, $\widehat P$ often becomes very close to an exact permutation matrix and the stepsize often reaches the upper bound 0.5, when the algorithm is close to convergence. In this scenario, our algorithm becomes equivalent to the Soft-Impute algorithm. Therefore, we adopt  the {Soft-Impute} algorithm as a baseline method for matrix completion without correspondence issue.

\subsection{$\text{M}^3\text{O}$-AS-DE for the d-correspondence Problem}
\label{app:algo}
In this section, we summarize our proposed algorithm {$\text{M}^3\text{O}$-AS-DE} for the  general d-correspondence problem \eqref{p:multiple} in Algorithm \ref{alg:M30general}.
To determinate the stop of the Max-Oracle, we find that the criterion $$\frac{1}{\sqrt n}\left\|\mathbf{1}_n^\top\widehat P-\mathbf{1}_n^\top\right\|_2\leq\varepsilon$$ works well in practice, which serves as a good indicator for the $\varepsilon$-good optimality.
\begin{algorithm}
  \small
  \SetKwFor{DoParallel}{for}{do in parallel}{end}
  \caption{{$\text{M}^3\text{O}$-AS-DE}}
  \label{alg:M30general}
  \KwIn{stepsize parameter $\omega$, number of correspondence $d$, number of iterations $N$, number of tolerance steps $K$, initial entropy coefficient $\epsilon$, tolerance $\varepsilon$, observation matrix $ M_o=[A_o, B_o^1,...,B_o^d]$, initial matrix $\widehat M=[\widehat M_A,\widehat M_{B_1},...,\widehat M_{B_d}]$, nuclear norm coefficient $\lambda$, the set of observable indexes  $\Omega$.}
  Initialize $\widehat P_\text{new}^l=\mathbf{0}_{n\times n}$ for $l=1,...,d$.

  \For{$k=1:N$}{

\DoParallel{$l=1:d$}{
  $\widehat P_\text{old}^l=\widehat P_\text{new}^l$.

$\hat \alpha^l=\hat\beta^l=\mathbf{1}_{n}$.

Compute the partial pairwise cost matrix $C(\widehat M_{B_l})$.

\While{$\frac{1}{\sqrt n}\left\|\mathbf{1}_n^\top\widehat P-\mathbf{1}_n^\top\right\|_2>\epsilon$}{
	
$\hat\alpha^l\leftarrow \text{Min}_\epsilon^{\text{row}}(C(\widehat M_{B_l})-\hat\alpha^l\oplus\hat\beta^l)+\hat\alpha^l$,

	$\hat\beta^l\leftarrow \text{Min}_\epsilon^{\text{col}}(C(\widehat M_{B_l})-\hat\alpha^l\oplus\hat\beta^l)+\hat\beta^l$,

  $\widehat P_\text{new}^l\leftarrow \text{exp}\bigg\{\frac{\hat\alpha^l\oplus\hat\beta^l-C(\widehat M_{B_l})}{\epsilon}\bigg\}$.

}
Compute the stepsize $\rho_l$ as discussed in Section \ref{sec:algorithm}.

$\widehat M_{B_l} \leftarrow \widehat M_{B_l}-\rho_l\nabla_{\widehat M} F^l_\epsilon(\widehat M_{B_l},\alpha^l,\beta^l),$ where $$F^l_\epsilon(\widehat M_{B_l},\alpha,\beta)\stackrel{\text{def.}}=\left\langle\mathbf{1}_{n}, \alpha\right\rangle+\left\langle\mathbf{1}_{n}, \beta\right\rangle-\epsilon\bigg\langle  \mathbf{1}_{n\times n} , \text{exp}\bigg\{\frac{\alpha\oplus\beta-C_\Omega(\widehat M_{B_l})}{\epsilon}\bigg\}\bigg\rangle.$$
}
$\widehat M_A\leftarrow \Pc_\Omega(A)+\Pc^\perp_\Omega(\widehat M_A)$.

  $\widehat M \leftarrow \text{prox}_{\lambda\left\|\cdot\right\|_*} ([\widehat M_A,\hat M_{B_1},...,\widehat M_{B_d}])).$
  
  \If{the objective value is not improved over $K$ steps}{
    $\epsilon\leftarrow \epsilon/2.$
  }{}
  
  }

  \end{algorithm}

  \subsection{The Baseline Algorithm}
  \label{app:BCD}
  We also extend the Baseline algorithm to a similar d-correspondence problem as \eqref{p:multiple}. Specifically, the extended Baseline algorithm tries to solve the unsmoothed problem  \begin{align}
    \label{p:multiplt_BCD}
    \min_{\widehat M}\min_{P_1,...,P_d}&\left\|\Pc_{\Omega}(A_o) - \Pc_{\Omega}(\widehat M_A)\right\|_F^2+\sum_{l=1}^{d}\langle C(\widehat M_{B_l}), P_l \rangle+\lambda\left\|\widehat M\right\|_*,\\
    &\text{s.t. }P_l \in \Pc_n,\text{ for }l=1,...,d.\notag
  \end{align}  We summarize the  algorithm in Algorithm \ref{alg:BCD}. 
  \begin{algorithm}[htbp]
    \small
    \caption{Baseline}\label{alg:BCD}
    \SetKwFor{DoParallel}{for}{do in parallel}{end}
    \KwIn{number of iterations $N$, number of Proximal Gradient iterations $N_p$, tolerance $\varepsilon$, observation matrix $ M_o=[A_o, B_o^1,...,B_o^d]$, initial matrix $\widehat M=[\widehat M_A,\widehat M_{B_1},...,\widehat M_{B_d}]$, nuclear norm coefficient $\lambda$, partial observation operator $\Pc_\Omega$.}
    \For{$k=1:N$}{
      \DoParallel{$l=1:d$}{
     Solving the inner problem of (\ref{p:multiplt_BCD}) for $\hat P^l$ up to tolerance $\varepsilon$ via Hungarian algorithm.     
     }
     $ X\leftarrow [A_o,\hat P^1 B_o^1,...,\hat P^dB_o^d]$.

     \For{$i=1:N_p$}{
      $\hat X\leftarrow \Pc_\Omega(X)+\Pc^\perp_\Omega(\hat M)$,
  
      $\hat M\leftarrow \text{prox}_{\lambda\left\|\cdot\right\|_*}(\hat X)$.
      }}
    \end{algorithm}

    \subsection{The MUS Algorithm}
    \label{app:adapt}
    In this section, we provide details for the MUS algorithm discussed in the Section \ref{sec:experiment}. Firstly, inspired by \citep{yao2021unlabeled}, we first transform the MRUC problem, i.e, to recover $[A,B]$ from $[A,\tilde PB]$, into a MUS problem as follows,
    \begin{align}
      \label{p:MUSadapt}
      \min_{P\in\Pc_n,W\in\Rbb^{m_B\times m_A}} \|A-P\tilde PBW\|_F^2.
    \end{align}
    Then, for the scenario without multiple correspondence and missing values, we adopt the algorithm in \citep{zhang2020optimal} to solve \eqref{p:MUSadapt}.

    To extend it into the d-correspondence problem considered by \eqref{p:multiple}, we adopt tow simple procedures. Specifically, to deal with the missing value, we first fill in the missing entries of each submatrices using the Soft-Impute algorithm. As for the multiple correspondence issue, we simply run the MUS algorithm in multiple times. For example, if we want solve the d-correspondence problem, we typically apply the MUS algorithm to the following series of problems in turn,
    \begin{align*}
      \min_{P\in\Pc_n,W\in\Rbb^{m_B\times m_A}} \|A_o-PB_o^lW\|_F^2,\ l=1,...,d.
    \end{align*}



\subsection{Details for the experiments}
\label{app:exp}
We use Matlab 2020b for the numerical experiments. The computer environment consists of Intel i9-10920x for CPU and 32GB RAM.
\subsubsection{Hyperparameters setting}
\textbf{Simulated data. }
We adopt fixed nuclear norm coefficient $\lambda$ in the experiments on simulated data. Specifically, for each setting, we choose the best $\lambda$ out of three candidate values that are 0.4, 0.5 and 0.6. Since adopting large $\omega$ will preserve the final performance and only degrade the convergence speed, we take $\omega=3$ for all the experiments. For the tolerance of Sinkhorn algorithm, we take $\varepsilon=0.01$ for all the experiments.

\textbf{MovieLens 100K. } For all the algorithms, we adopt a sequence of values for $\lambda$. Specifically, we start the algorithm with $\lambda=300$, and once the algorithm stops improving the objective function for 10 steps, we shrink the value as $\lambda\leftarrow \lambda-10$ until $\lambda$ becomes lower than 10. We take $\omega=0.5$ for all the experiments and also set the tolerance of Sinkhorn algorithm as $\varepsilon=0.01$. 

\subsubsection{Numbers of Skinhorn Iteration}
Typically, the numbers of Skinhorn iteration  required to retrieve an $\varepsilon$-good solution mainly depends on the entropy coefficient $\epsilon$. This also implies that the decaying entropy regularization strategy can also accelerate the convergence process. Figure \ref{fig:niter} shows the relationship between the  numbers of Skinhorn iteration and entropy coefficient $\epsilon$ under the same simulated data setting with Figure \ref{fig:exp1}. The dash lines and intervals reflect mean, min, maximum aggregated from 20 independent trials. For a practical implementation, we restrict the maximum numbers of Skinhorn iteration to 10000 on the numerical experiments.
\begin{figure}[htbp]
  \centering
  \includegraphics[width=6cm]{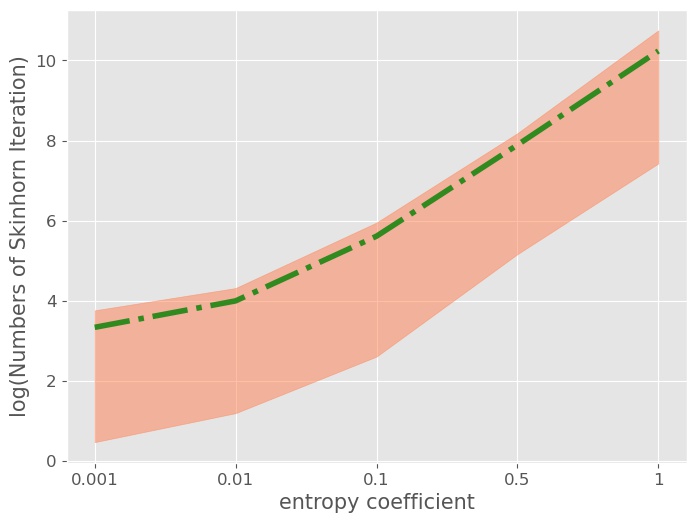}
  \caption{The required numbers of Skinhorn iteration v.s. entropy coefficient $\epsilon$}
  \label{fig:niter}
\end{figure}

\subsubsection{Problem formulation for the face recovery problem}
In the face recovery experiment, the cost matrix $C$ is constructed as $$C(i,j)=\|P_{\Omega}(B(i)-\widehat{M}(j))\|_F^2,$$ where $B(1),...,B(13)\in\Rbb^{30\times 30}$ are the shuffled pixel blocks from the upper left of the corrupted image shown in Figure \ref{fig:perm_face}, and $\widehat{M}(1),...,\widehat{M}(13)\in\Rbb^{30\times 30}$ are the corresponding recovered pixel blocks from the upper left of the current recovered image. 

We choose fixed stepsize $\rho_k=0.1$, and choose the initial entropy coefficient as $\epsilon=100$. To obtain the initial matrix $\widehat M$, we first complete each pixel blocks independently using the Soft-Impute algorithm. We denote the filled matrix as $M_1$, and carry out the singular decomposition of it as $M_1=\sum_i \sigma_i u_i v_i^\top$. Then we set the initial matrix as $\widehat M=\sigma_1 u_1 v_1^\top$. 

More results similar to Figure \ref{fig:face} are shown in Figure \ref{fig:moreface}.

\begin{figure}[htbp]
  \centering
  \subfigure[ Original]{
  \includegraphics[width=3cm]{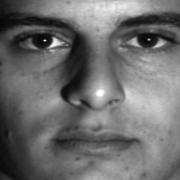}
  }
  \quad
  \subfigure[Corrupted]{
  \includegraphics[width=3cm]{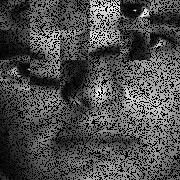}
  }
  \quad
  \centering
  \subfigure[Baseline]{
  \includegraphics[width=3cm]{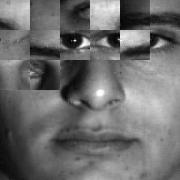}
  }
  \quad
  \subfigure[$\text{M}^3\text{O}$]{
  \includegraphics[width=3cm]{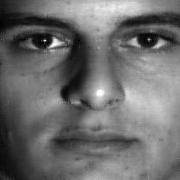}
  }
  \quad
  \centering
  \subfigure[ Original]{
  \includegraphics[width=3cm]{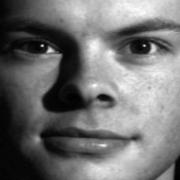}
  }
  \quad
  \subfigure[Corrupted]{
  \includegraphics[width=3cm]{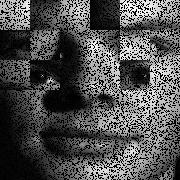}
  }
  \quad
  \centering
  \subfigure[Baseline]{
  \includegraphics[width=3cm]{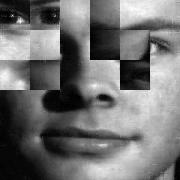}
  }
  \quad
  \subfigure[$\text{M}^3\text{O}$]{
  \includegraphics[width=3cm]{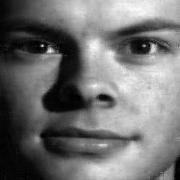}
  }
  \quad
  \centering
  \subfigure[ Original]{
  \includegraphics[width=3cm]{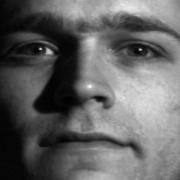}
  }
  \quad
  \subfigure[Corrupted]{
  \includegraphics[width=3cm]{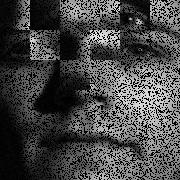}
  }
  \quad
  \centering
  \subfigure[Baseline]{
  \includegraphics[width=3cm]{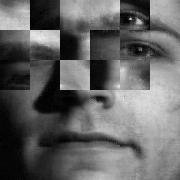}
  }
  \quad
  \subfigure[$\text{M}^3\text{O}$]{
  \includegraphics[width=3cm]{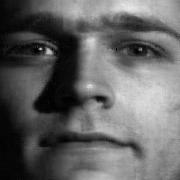}
  }
  \caption{\small Performance of {{M}$^3$O} on more face images from Yale B database.}
  \label{fig:moreface}\vspace{-0.5cm}
  \end{figure}

\end{document}